\declaretheorem[parent=section]{theorem}
\declaretheorem[sibling=theorem]{lemma}
\declaretheorem[sibling=theorem]{claim}
\declaretheorem[sibling=theorem,style=definition]{definition}
\newtheorem*{theorem*}{Theorem}
\newtheorem*{lemma*}{Lemma}
\declaretheoremstyle[
        spaceabove=\topsep, 
        spacebelow=\topsep, 
        bodyfont=\normalfont,
        notefont=\normalfont\bfseries,
        notebraces={}{},
        qed=$\blacksquare$, 
    ]{proofstyle}
\declaretheorem[style=proofstyle,numbered=no,name=Proof]{proof}
\definecolor{dgreen}{rgb}{0,0.5,0}
    \let\Cref\crtCref
    \let\cref\crtcref
\crefname{claim}{Claim}{Claims}
\newcommand{\E}{\mathop{\mathbb{E}}}
\newcommand{\cH}{\mathcal{H}}
\newcommand{\cF}{\mathcal{F}}
\newcommand{\cX}{\mathcal{X}}
\DeclareMathOperator{\Ldim}{Ldim}
\newcommand{\F}{\mathcal{F}}
\newcommand{\Z}{\mathcal{Z}}
\newcommand{\ignore}[1]{}
\newcommand{\s}[2]{{#1^{(#2)}}}
\newcommand{\sib}{\mathrm{s}}
\renewcommand{\H}{\mathcal{H}}
\newcommand{\V}{\mathcal{V}}
\renewcommand{\P}{\mathop{\mathbb{P}}}
\newcommand{\Tc}{\mathcal{G}_{c}}
\newcommand{\X}{\mathcal{X}}
\newcommand{\hist}{\proc{hist}}
\newcommand{\freq}{\mathrm{freq}}
\newcommand{\efreq}{\overline{\mathrm{freq}}}
\newcommand{\proc}[1]{{\fontfamily{lmtt}\selectfont #1}\xspace}
\newcommand{\counter}{\mathrm{counter}}
\newcommand{\svt}{\proc{sparse}}
\newcommand{\aboveT}{\proc{Above-threshold}}
\newcommand{\svto}{\proc{HistSparse}}
\newcommand{\dpsoa}{\proc{DP-SOA}}
\newcommand{\soa}{\proc{SOA}}
\newcommand{\lap}{\mathrm{LAP}}
\newcommand{\constfreq}{\Theta_{(\ref{eq:Gstability})}(k_1,T,\beta)}
\newcommand{\consthistsparse}{\Theta_{(\ref{eq:histsparse})}(c,\eta, T,\beta,\epsilon,\delta)}
\newcommand{\consteta}{
\frac{2^{-4k_1}}{4k_1}}
\newcommand{\consthist}{\Theta_{(\ref{eq:hist})}(\eta,\beta,\epsilon,\delta)}
\newcommand{\constsparse}{\Theta_{(\ref{eq:sparse})}(c,\alpha,\beta,\epsilon,\alpha)}
\newcommand{\constc}{4k_1/\eta} 
\newcommand{\ep}{\epsilon}
\newcommand{\MW}{\mathcal{W}}
\newcommand{\MX}{\mathcal{X}}
\newcommand{\ME}{\mathcal{E}}
\newcommand{\myparagraph}[1]{\smallskip\noindent\textbf{#1}}
\newcommand{\fref}[1]{\cref{#1}}
\title{Littlestone Classes are Privately Online Learnable}
 \author{
 Noah Golowich\thanks{MIT EECS, Cambridge, MA.  \texttt{nzg@mit.edu}.} 
 \and Roi Livni\thanks{Department of Electrical Engineering, Tel Aviv University; \texttt{rlivni@tauex.tau.ac.il}.} 
 }
\date{\today}
\begin{document}
\maketitle

\begin{abstract}
    We consider the problem of online classification under a privacy constraint. In this setting a learner observes sequentially a stream of labelled examples $(x_t, y_t)$, for $1 \leq t \leq T$, and returns at each iteration $t$ a hypothesis $h_t$ which is used to predict the label of each new example $x_t$. The learner's performance is measured by her regret against a known hypothesis class $\mathcal{H}$.
    We require that the algorithm satisfies the following privacy constraint: the sequence $h_1, \ldots, h_T$ of hypotheses output by the
    algorithm needs to be an $(\epsilon, \delta)$-differentially private function of the whole input sequence $(x_1, y_1), \ldots, (x_T, y_T)$.
    We provide the first non-trivial regret bound for the realizable setting. Specifically, we show that if the class $\mathcal{H}$ has constant Littlestone dimension then, given an oblivious sequence of labelled examples, there is a private learner that makes in expectation at most $O(\log T)$ mistakes -- comparable to the optimal mistake bound in the non-private case, up to a logarithmic factor. Moreover, for general values of the Littlestone dimension $d$, the same mistake bound holds but with a doubly-exponential in $d$ factor. 
    A recent line of work has demonstrated a strong connection between classes that are online learnable and those that are differentially-private learnable. Our results strengthen this connection and show that an online learning algorithm can in fact be directly privatized (in the realizable setting).
    We also discuss an adaptive setting and provide a sublinear regret bound of $O(\sqrt{T})$.
\end{abstract}

\section{Introduction}
Privacy-preserving machine learning has attracted considerable attention in recent years, motivated by the fact that individuals' data is often collected to train statistical models, and such models can leak sensitive data about those individuals \citep{dr14,kearns_ethical_2019}. The notion of \emph{differential privacy} has emerged as a central tool which can be used to formally reason about the privacy-accuracy tradeoffs one must make in the process of analyzing and learning from data. A considerable body of literature on \emph{differentially private machine learning} has resulted, ranging from empirical works which train deep neural networks with a differentially private form of stochastic gradient descent \citep{abadi_deep_2016}, to a recent line of theoretical works which aim to characterize the optimal sample complexity of privately learning an arbitrary hypothesis class \citep{alon_private_2019,bun20equivalence,ghazi2020sample}.

Nearly all of these prior works on differentially private learning, however, are limited to the statistical learning setting (also known as the \emph{offline} setting): this is the setting where the labeled data, $(x_t, y_t)$, are assumed to be drawn i.i.d.~from some unknown population distribution. This setting, while very well-understod and readily amenable to analysis, is unlikely to hold in practice. Indeed, the data $(x_t, y_t)$ fed as input into the learning algorithm may shift over time (e.g., as a consequence of demographic changes in a population), or may be subject to more drastic changes which are \emph{adaptive} to the algorithm's prior predictions (e.g., drivers' reactions to the recommendations of route-planning apps may affect traffic patterns, which influence the input data to those apps). For this reason, it is desirable to develop provable algorithms which make fewer assumptions on the data.

 In this work, we do so by studying the setting of \emph{(private) online learning}, in which the sequence of data $(x_t, y_t)$ is allowed to be arbitrary, and we also discuss a certain notion of privacy in a setting where it is even allowed to adapt to the algorithm's predictions in prior rounds. We additionally restrict our attention to the problem of classification, namely where the labels $y_t \in \{0,1\}$; thus we introduce the problem of differentially private online classification, and prove the following results (see  \cref{sec:opl-setup} for the exact setup):

\begin{itemize}
\item In the realizable setting with an oblivious adversary, we introduce a private learning algorithm which, for hypothesis classes of Littlestone dimension $d$ (see \cref{sec:prelim:online}) and time horizon $T$, achieves a mistake bound of $\tilde O(2^{O(2^d)} \cdot \log T)$, ignoring the dependence on privacy parameters (\cref{thm:main:oblivious}).
\item In the realizable setting with an adaptive adversary, we show that a slight modification of the above algorithm achieves a mistake bound of $\tilde O(2^{O(2^d)} \cdot \sqrt{T})$ (\cref{thm:main:adaptive}). 
\end{itemize}
We remark that no algorithm (even without privacy, allowing randomization, and in the oblivious adversary setting) can achieve a mistake bound of smaller than $\Omega(d)$ for classes of Littlestone dimension $d$ \citep{littlestone1988learning,shalev2011online}. Therefore, a class of infinite Littlestone dimension cannot have any finite mistake bound, and the regret for any algorithm, for any time horizon $T$, is $\Omega(T)$.
Thus, our results listed above, which show a mistake-bound (which is also the regret in the realizable setting) of $\tilde O_d(\sqrt{T})$ for classes of Littlestone dimension $d$, establish that in the realizable setting, finiteness of the Littlestone dimension is necessary and sufficient for online learnability (\cite{rakhlin_online_2015}) with differential privacy.

Recently it was shown by \citet{alon_private_2019} and \citet{bun20equivalence} (later to be improved by \citet{ghazi2020sample}) that finiteness of the Littlestone dimension is necessary and sufficient for private learnability in the \emph{offline} setting, namely with i.i.d.~data (and both in the realizable and agnostic settings). Since, as remarked above, the Littlestone dimension characterizes online learnability (even without privacy), this means that a binary hypothesis class is privately (offline) learnable if and only if it is online learnable. Our result thus strengthens this connection, showing that the equivalence also includes private \emph{online} learnability (in the realizable setting). 

\subsection{Related work}
A series of papers \citep{dwork_differential_2010,jain_differentially_2012,thakurta_nearly_2013,dwork_analyze_2014,agarwal_price_2017} has studied the problem of diferentially private online convex optimization, which includes specific cases such as private prediction from expert advice and, when one assumes imperfect feedback, private non-stochastic multi-armed bandits \citep{tossou_algorithms_2016,tossou_achieving_2017,ene_projection_2020,hu_optimal_2021}. These results show that in many regimes privacy is free for such problems: for instance, for the problem of prediction from the expert advice (with $N$ experts), \citet{agarwal_price_2017} shows that an $\ep$-differentially private algorithm (based on follow-the-regularized-leader) achieves regret of $O \left( \sqrt{T}+ \frac{N \log^2 T}{\ep} \right)$, which matches the non-private regret bound of $O(\sqrt{T \log N})$ when $T \geq \tilde \Omega((N/\ep)^2)$. Our results can be seen as extending such ``privacy is (nearly) free'' results to the nonparametric setting where we instead optimize over an arbitrary class of finite Littlestone dimension. Our techniques are different from those of the above papers. 

In addition to \cite{bun20equivalence,ghazi2020sample} which establish private learning algorithms for classes with finite Littlestone dimension in the i.i.d.~(offline) setting, there has been an extensive line of work on private learning algorithms in the offline setting: \cite{kasiviswanathan2008what,beimel_characterizing_2019,beimel_bounds_2014,feldman_sample_2014} study the complexity of private learning with \emph{pure differential privacy}, \cite{kaplan_privately_2020, bun_differentially_2015,bun_composable_2018,beimelNissimStemmer2013} study the sample complexity of privately learning thresholds, and \cite{kaplan_private_2020,kaplan_how_2020,beimel_private_2019} study the sample complexity of privately learning halfspaces.

\section{Preliminaries}\label{sec:prelim}
In this section we introduce some background concepts used in the paper. 
\subsection{Online Learning}\label{sec:prelim:online}
We begin by revisiting the standard setting of online-learning:
We consider a sequential game between a \emph{learner} and an \emph{adversary}. Both learner and adversary know the sets $\cX$ and $\cH$. The game proceeds for $T$ rounds (again $T$ is known) and at each round $t\le T$, the adversary chooses a pair $(x_t,y_t)$ and presents the learner with the example $x_t$. The learner then must present the adversary with a hypothesis (perhaps randomly) $h_t : \cX \to \{0,1\}$.  $h_t$ is not required to lie in $\cH$\footnote{This setup is known as the \emph{improper} learning problem. In the \emph{proper} version of the problem, it is required that $h_t \in \cH$ and we leave a study of proper private online learning for future work. (see \cite{hanneke2021online} for a discssion on proper online learning in the non-private case}. Finally the adversary presents the learner with $y_t$, which the learner uses to update its internal state. The performance of the learner is measured by its regret which is its number of mistake vs. the optimal decision in hindsight:
     \begin{equation}
        \label{eq:reg-defn}
        \E\left[\sum_{t=1}^T 1[h_t(x_t) \neq y_t] - \min_{h^\star \in \cH} \sum_{t=1}^T 1[h^\star(x_t) \neq y_t]\right].
        \end{equation}

The adversary is said to be \emph{realizable} if it presents the learner with a sequence of examples $(x_t, y_t)$ so that there is some $h^\star \in \cH$ so that for each $t \in [T]$, $h^\star(x_t) = y_t$. In the realizable setting, the regret simply counts the number of mistakes the learner makes. And we measure the performance by its \emph{mistake bound}, namely the maximum, over all possible realizable adversaries, of
\[\E\left[\sum_{t=1}^T 1[h_t(x_t) \neq y_t]\right].\] 
In the setting with an \emph{agnostic} adversary, we do not require such $h^\star$ to exist; and we measure the  learner by its (worst-case) \emph{regret}, as in \cref{eq:reg-defn}.
In this paper we focus on the  realizable setting; the (private ) agnostic setting is left as an interesting direction for future work.  

Additionally, we normally make a distinction between two types of adversaries: An \emph{oblivious} adversary chooses its sequence in advance and at each iteration $(x_t,y_t)$ is revealed to the learner. In the \emph{adversarial} setting, the adversary may choose $(x_t,y_t)$ as a function of the learner's previous choices: i.e. $h_1,\ldots, h_{t-1}$.  This definition follows the standard setup of online learning (see \cite{cesa2006prediction} for example). We note though, that in the non-private setting of online binary classification, one can obtain results against an adversary that even gets to observe the learner's prediction at time-step $t$. However, we will simplify here by considering the more standard setting. It is interesting to find out if we can compete against such a strong adversary in the private setup.

\myparagraph{Littlestone dimension}
We next turn to introduce the Littlestone dimension which is a combinatorial measure that turns out to characterize learnability in the above setting.

Let $\cH$ be a class of hypotheses $h : \MX \to \{0,1\}$. To define the Littlestone dimension of $\cH$, we first introduce \emph{mistake trees}: a mistake tree of \emph{depth} $d$ is a complete binary tree, each of whose non-leaf nodes $v$ is labeled by a point $x_v \in \MX$, and so that the two out-edges of $v$ are labeled by $0$ and $1$. We associate each root-to-leaf path in a mistake tree with a sequence $(x_1, y_1), \ldots, (x_d, y_d)$, where for each $i \in [d]$, the $i$th node in the path is labeled $x_i$ and the path takes the out-edge from that node labeled $y_i$. A mistake tree is said to be \emph{shattered} by $\cH$ if for any root-to-leaf path whose corresponding sequence is $(x_1, y_1), \ldots, (x_d, y_d)$, there is some $h \in \cH$ so that $h(x_i) = y_i$ for  all $i \in [d]$. The Littlestone dimension of $\cH$, denoted $\Ldim(\cH)$, is the depth of the largest mistake tree that is shattered by $\cH$. 

\myparagraph{The Standard Optimal Algorithm (\soa)}
Suppose $\cH$ is a binary hypothesis class with Littlestone dimension $d$. Littlestone \cite{littlestone1988learning} showed that there is an algorithm, called the \emph{Standard Optimal Algorithm} (\soa), which, against an adaptive and realizable adversary, has a mistake bound of $d$; moreover, this is the best possible mistake bound. We will access the \soa as a black box. The underlying assumption we make is that given a realizable sequence $(x_1,y_1),\ldots, (x_T,y_T)$, the $\soa$ makes at most $\Ldim(\cH)$ mistakes. We will also assume that whenever the algorithm $\soa$ makes a mistake then it changes it state: namely, if the algorithm makes mistake on example $t$ then $h_{t+1}\ne h_t$, this is in fact true for the \soa algorithm, but it can be seen that any algorithm with mistake bound can be modified to make sure this holds (simply by reiterating the mistake until the algorithm does change state). We refer the reader to \cite{littlestone1988learning,shalev2011online} for the specifics of it.
\subsection{Differential Privacy}
We next recall the standard notion of $(\epsilon,\delta)$--differential privacy:
\begin{definition}[Differential privacy]
  \label{def:dp}
  Let $n$ be a positive integer, $\ep, \delta \in (0,1)$, and $\MW$ be a set. A randomized algorithm $A : (\MX \times \{0,1\})^n \to \MW$ is defined to be \emph{$(\ep, \delta)$-differentially private} if for any two datasets $S, S' \in (\MX \times \{0,1\})^n$ differing in a single example, and any event $\ME \subset \ME$, it holds that
  \begin{align}
\Pr[A(S) \in \ME] \leq e^\ep \cdot \Pr[A(S') \in \ME] + \delta\nonumber.
  \end{align}
\end{definition}   
\myparagraph{Adaptive Composition} The online nature of the problem naturally requires us to deal with adaptive mechanisms that query the data-base. We thus depict here the standard framework of adaptive querying, and we refer the reader to \citet{dr14} for a more detailed exposition. 

In this framework we assume a sequential setting, where at step $t$ an adversary chooses two adjacent datasets $S_t^1$ and $S_t^0$, and a mechanism $M_t(S)$ from a class $\cF$ and receives $z_t^b = M_t(S_t^b)$ for some $b\in \{0,1\}$ (where $b$ does not depend on $t$).
 \begin{definition}
 We say that the family $\cF$ of algorithms over databases satisfies \emph{$(\epsilon,\delta)$-differential privacy under $T$-fold adaptive composition} if for every adversary $A$ and event $\ME$, we have 
 \[ \Pr((z_1^0,\ldots, z^0_T)\in \ME)\le e^{\epsilon} \Pr((z_1^1,\ldots, z^1_T)\in \ME)+\delta.\]
 \end{definition}

\section{Problem Setup}\label{sec:opl-setup}
We now formally introduce the main problem considered in this paper, namely that of \emph{private online learning}. Let $\cX$ be a set, and let $\cH$ be a set of \emph{hypotheses}, namely of functions $h : \cX \to \{0,1\}$. 
 We consider the setting depicted in \cref{sec:prelim:online} and in this framework we want to study the learnability of \emph{private} learners which are defined next. We make a distinction between the case of an oblivious and an adaptive adversary:

    \myparagraph{Private online learning vs. an \emph{oblivious} adversary} As discussed, in this setting the adversary must choose the entire sequence $(x_1, y_1), \ldots, (x_T, y_T)$ before its interaction with the learner (though it may use knowledge of the learner's \emph{algorithm}). In particular, the samples $(x_t, y_t)$ do not depend on any random bits used by the learner.
 Thus, in the \emph{private online learning problem} we merely require that the sequence of hypotheses $(h_1, \ldots, h_T)$ output by the learner is $(\epsilon, \delta)$-differentially private as a function of the entire input sequence $(x_1, y_1), \ldots, (x_T, y_T)$.
  
\myparagraph{Private online learning vs. an \emph{adaptive} adversary:} In the adaptive setting, the adversary may choose each example $(x_t, y_t)$ as a function of all of the learner's hypotheses up to $t$.
This makes the notion of privacy a little bit more subtle, so we need to carefully define what we mean here by $(\epsilon,\delta)$-privacy. We consider then the following scenario:

At each round $t$, the adversary outputs two outcomes $(x^0_t,y^0_t)$ and $(x^1_t,y^1_t)$. The learner then outputs $h^b_t$ and $(x_t^b,y_t^b)$ is revealed to the learner where $b\in\{0,1\}$ is independent of $t$. We require that the sequences $S_T^0=\{(x^0_t,y^0_t)\}$ and $S_T^1=\{(x^1_t,y^1_t)\}$ differ in, at most, a single example.
We will say that an adaptive online classification algorithm is $(\epsilon,\delta)$ differentially private, if for any event $\ME$ and any adversary, it holds that
\[\Pr[(h^1_1,\ldots, h^1_T) \in \ME] \leq e^\ep \cdot \Pr[(h^0_1,\ldots, h^0_T) \in \ME] + \delta\nonumber.
  \]

The notion is similar to privacy under $T$-fold adaptive composition. 
Normally, though, for a mechanism to be $(\epsilon,\delta)$-differentially private under $T$-fold adaptive compositions, \citet{dwork2014algorithmic} requires it to be private under an adversary that may choose at each iteration \emph{any} two adjacent datasets, $S_i^0$, $S^1_i$.  Note, however that, in the online setup, the utility is dependent only on a single point at each iteration, hence such a requirement will be too strong (in fact, the learner will then be tested on two arbitrary sequences).


\section{Main Results}\label{sec:results}
We next state the main results of this paper, we start with a logarithmic regret bound for realizable oblivious learning.

\begin{theorem}[Private Oblivious online-learning]\label{thm:main:oblivious}
For a choice of $k_1=\tilde{O}(2^{d+1})$, and
\[k_2 =\tilde{O}\left(\frac{2^{8\cdot 2^d}}{\epsilon}\ln T/\delta \right),\]

Running \dpsoa (\cref{alg:stablesoa}) for $T$ iterations on any realizable sequence $(x_1,y_1),\ldots, (x_T,y_T)$, the algorithm outputs a sequence of predictors $h_1,\ldots, h_T$ such that
\begin{itemize}
    \item The algorithm is $(\epsilon,\delta)$ differentially private.
    \item The expected number of mistakes the algorithm makes is
    \[\E[\sum_{t=1}^T h_t(x_t)\ne y_t]=\tilde{O}\left(\frac{2^{8\cdot 2^d}}{\epsilon}\ln T/\delta\right).\]
\end{itemize}
\end{theorem}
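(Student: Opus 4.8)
The plan is to build a differentially private online learner by privatizing the Standard Optimal Algorithm (\soa) through a two-level construction. At the core, I would run many independent copies of \soa in parallel — think of these as "experts" — and use a private aggregation/selection mechanism to decide which prediction to output at each round. The key observation enabling this is that on any realizable sequence, each copy of \soa makes at most $d = \Ldim(\cH)$ mistakes, and — crucially — by the assumption stated in the preliminaries, \soa changes its internal state \emph{only} when it errs. Thus over the whole stream of $T$ rounds, a given copy of \soa passes through at most $d+1$ distinct states, each of which is a hypothesis $h : \cX \to \{0,1\}$. This bounds the "amount of information" any single example injects into the system and is what makes privacy tractable: changing one example $(x_t,y_t)$ can only flip the outcomes along a bounded-depth branch of state transitions.

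The first step is to set up a branching ensemble of \soa copies indexed by the subset of past rounds on which they were "fed" an example, and to argue that the collection of reachable states forms a tree of depth $d$ with branching controlled by the number of mistakes; this is where the doubly-exponential $2^{O(2^d)}$ blowup should enter (roughly, one tracks all $\le 2^d$-sized patterns of which rounds caused state changes, and $k_1 = \tilde O(2^{d+1})$ is the relevant branching/sampling width). The second step is the privacy argument: at each round the learner must (a) decide whether to update — i.e., detect whether the current aggregate hypothesis errs "enough" across the ensemble — and (b) when updating, select a new hypothesis. For (a) I would use a sparse-vector-type mechanism (the macros \svt, \aboveT, \svto in the preamble strongly suggest the AboveThreshold / Sparse Vector Technique is the intended tool): since at most $\tilde O(2^{O(2^d)})$ "above-threshold" events occur over the entire run (each tied to a genuine mistake of the underlying ensemble), the privacy cost of the threshold queries is paid only on those few rounds. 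For (b), selection among the bounded set of candidate hypotheses can be done with the exponential mechanism or report-noisy-max. Composing these via the advanced adaptive composition theorem over the $\tilde O(2^{O(2^d)})$ update events yields $(\epsilon,\delta)$-privacy with the claimed $k_2 = \tilde O\!\big(\tfrac{2^{8\cdot 2^d}}{\epsilon}\ln(T/\delta)\big)$ dependence.

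For the mistake bound, I would argue that the learner errs only on rounds where the ensemble is "unstable" — where a $\freq$-type statistic (majority vote weight among the \soa copies) is close to the decision boundary or where the sparse-vector mechanism fires. The number of genuine instability events is bounded by the total number of state changes in the ensemble, which is $\tilde O(2^{O(2^d)})$ independent of $T$; the remaining mistakes come from the noise added for privacy (the Laplace/exponential-mechanism error on the $\tilde O(2^{O(2^d)})$ firing rounds), which is where the $\ln T / \delta$ factor appears. Summing, the expected number of mistakes is $\tilde O\!\big(\tfrac{2^{8 \cdot 2^d}}{\epsilon}\ln(T/\delta)\big)$, matching both bullets of the theorem, and — key point — this is $O(\log T)$ for constant $d$.

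The main obstacle I anticipate is the \emph{coupled analysis of stability and privacy}: one needs a single potential/amortization argument showing that the number of times the sparse-vector mechanism can fire is bounded by a function of $d$ alone and \emph{not} of $T$, despite the adversary (even an oblivious one) being able to probe the ensemble's decision boundary on every round. The delicate part is defining the "frequency stability" event (the \constfreq\ and \eqref{eq:Gstability}-type condition hinted at in the macros) so that (i) it is violated at most $\tilde O(2^{O(2^d)})$ times, giving the $T$-independent mistake count, and simultaneously (ii) it has low enough sensitivity that the AboveThreshold mechanism detecting its violation is private — reconciling these two requirements, rather than the mechanics of composition or the SOA mistake bound, is where the real work lies.
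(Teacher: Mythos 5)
You have correctly identified several of the right ingredients --- many parallel copies of \soa, the sparse vector technique to avoid paying for privacy on the many rounds where nothing changes, and a private selection step invoked only a $T$-independent number of times --- and all of these do appear in the paper. But the proposal has a genuine gap at its center: you never establish that any single hypothesis is \emph{frequent} among the ensemble of \soa copies, and on adversarial (non-i.i.d.) data there is no reason it should be. Independent copies of \soa run on disjoint random subsequences of an adversarial stream can end up in wildly different states, so your ``majority vote weight'' need not concentrate on any hypothesis, and then both the private selection step and the stability/firing-count argument have nothing to latch onto. The paper's main technical work is precisely the construction of an \emph{online globally stable} learner: each of the $k_2$ copies is not a single \soa run but a tournament tree with $k_1$ leaves, in which sibling vertices whose \soa outputs disagree are merged by fabricating a disagreement example with a uniformly \emph{guessed} label. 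A tree stays consistent with $h^\star$ with probability $2^{-O(k_1)}$; a consistent tree must exhibit a ``collision'' (two pertinent siblings with equal \soa output) by depth $d$ by a pigeonhole argument; and a McDiarmid-type concentration bound for sampling without replacement (over random re-assignments of depth-$k$ subtrees) converts many collisions into the existence of one hypothesis output by a $2^{-O(k_1)}$ fraction of the trees at every round (\cref{lem:Gstability}). This is exactly the step your last paragraph flags as ``the delicate part'' but does not supply, and it is where the doubly-exponential factor actually comes from.

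The mistake-bound argument is also different from what you describe, and your version would not go through: the learner is \emph{not} guaranteed to err only on ``unstable'' rounds. The paper instead uses a probabilistic potential argument: whenever the learner errs at time $t$, with probability at least $\eta/(16 k_1)$ the example $(x_t,y_t)$ is routed to the pertinent vertex whose \soa output equals $h_t$, which forces a collision to be resolved and strictly decreases the number of pertinent vertices $|\V_t|$; since $|\V_1| = k_1 k_2$, the expected number of mistakes is at most $O(k_1^2 k_2/\eta)$. The same potential also bounds the number of times the sparse-vector mechanism fires (each re-invocation of \hist requires the frequency of the current hypothesis to have dropped by $\eta/4$, costing $\eta k_2/4$ pertinent vertices), which is how conditions (i) and (ii) of your anticipated obstacle are reconciled. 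Finally, your proposed indexing of the ensemble ``by the subset of past rounds on which they were fed an example'' would blow up with $T$ rather than with $d$; the paper's branching is the fixed binary-tree structure of the tournament, of depth roughly $d$, which is what keeps the whole bound logarithmic in $T$.
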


\cref{thm:main:oblivious} shows that, up to logarithmic factor, the number of mistakes in the private case is comparable with the number of mistakes in the non-private case, when $d$ the Littlestone dimension of the class is constant. We obtain, though, a strong deterioration in terms of the Littlestone dimension -- sublinear dependece vs. double exponential dependence. As discussed, \citet{ghazi2020sample} improved the dependence in the batch case to polynomial, and it remains an open question if similar improvement is applicable in the online case.
We next turn to the adversarial case
\begin{theorem}[Private Adaptive online-learning]\label{thm:main:adaptive}
There exists an adaptive online classification algorithm that is $(\epsilon,\delta)$-differentially private with expected regret over a realizble seqeunce:
\[\E[\sum_{t=1}^T h_t(x_t)\ne y_t]=\tilde{O}\left(  \frac{2^{O(2^d)}\sqrt{T}\log 1/(\delta)}{\epsilon}\right).\]
\end{theorem}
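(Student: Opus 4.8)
The plan is to bootstrap the oblivious algorithm \dpsoa\ of \cref{thm:main:oblivious} into an algorithm for the adaptive setting by a ``slight modification'' in two places: the noise scale used inside the internal stability test, and the way privacy is accounted for. In the oblivious analysis the sequence of hypotheses is certified private by a sparse-vector/above-threshold argument that exploits the fact that the whole labelled stream is fixed in advance — so the database the mechanism operates on is independent of the mechanism's internal randomness — and consequently the rounds in which the output hypothesis does \emph{not} change are essentially ``free''; this is what produces the $O(\log T)$ mistake bound. Against an adaptive adversary the database $((x_1,y_1),\dots,(x_t,y_t))$ is itself a post-processing of the previously released hypotheses $h_1,\dots,h_{t-1}$, so the sparse-vector guarantee no longer applies directly. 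I would instead have each round release $h_t$ by a query of sensitivity $1$ in the current database, re-initialising the privacy budget each step, so that the whole interaction becomes a $T$-fold adaptive composition of per-round $\ep_0$-private mechanisms.

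For the privacy guarantee I would argue as follows. By construction the adversary's round-$t$ output is a function only of $h_1,\dots,h_{t-1}$ together with its own (data-independent) randomness, and the two realised streams $S_T^0,S_T^1$ differ in a single example; this is exactly the situation covered by the $T$-fold adaptive composition framework recalled in \cref{sec:prelim}. Applying the advanced composition theorem \citep{dr14} to $T$ per-round $\ep_0$-private steps with $\ep_0 = \Theta\!\left(\ep/\sqrt{T\ln(1/\delta)}\right)$ then gives the claimed $(\ep,\delta)$-differential privacy of $(h_1^b,\dots,h_T^b)$. The one point to check carefully is that the modified \dpsoa\ can be written so that each $h_t$ is produced by a single bounded-sensitivity query to the current database — i.e.\ that the ``should I switch hypotheses now?'' test is a noisy threshold query on a $1$-sensitive statistic — which is true of the stability test in \cref{alg:stablesoa} once the running above-threshold counter is reset each step.

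For the regret I would retain the combinatorial core of the oblivious analysis. Fix a realiser $h^\star\in\cH$; the underlying \soa\ makes at most $\Ldim(\cH)=d$ mistakes even against an adaptive realizable adversary, and the stability/amplification layer behind \cref{thm:main:oblivious} turns this into a bound of $2^{O(2^d)}$ on the number of genuine state changes of \dpsoa\ — this part is unaffected by adaptivity since it uses only realizability. Between two consecutive state changes the algorithm commits to a fixed (randomised) hypothesis and changes it only once its noisy stability counter crosses the threshold; because the per-round budget is now $\ep_0=\Theta\!\left(\ep/\sqrt{T\ln(1/\delta)}\right)$, the threshold gap (equivalently the Laplace noise magnitude) must be of order $\tilde O\!\left(\sqrt T\,\ln(1/\delta)/\ep\right)$ to keep false switches from leaking privacy, so up to that many mistakes may be incurred before a needed switch occurs. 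Multiplying the $2^{O(2^d)}$ episodes by the $\tilde O\!\left(\sqrt T\,\ln(1/\delta)/\ep\right)$ per-episode stall, and absorbing into $\tilde O$ the standard Laplace-tail union bound over the $T$ rounds that controls missed and spurious switches, yields the stated expected regret $\tilde O\!\left(2^{O(2^d)}\sqrt T\,\ln(1/\delta)/\ep\right)$.

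The step I expect to be the main obstacle is making the privacy argument airtight in the presence of adaptivity: one must verify that no sparse-vector-style ``free lunch'' is implicitly being invoked, and in particular that the statistic queried each round has sensitivity $1$ with respect to a single change in the realised stream $S_T^b$ even though that stream was generated through a feedback loop with the (private) hypotheses. Relatedly, pinning the exponent of $T$ to exactly $1/2$ — rather than, say, $2/3$ — hinges on two facts holding simultaneously: the number of charged (hypothesis-changing) rounds stays $O_d(1)$ under adaptive probing, which follows from realizability, and the inter-change stall is no larger than the noise scale forced by composition, which is where the $\sqrt T$ ultimately originates. Everything else — the Littlestone/\soa\ bookkeeping, the Laplace tail bounds, and the composition arithmetic — is routine given \cref{thm:main:oblivious}.
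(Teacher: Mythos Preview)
Your proposal diverges from the paper's approach and contains a real gap in the regret argument. The paper does \emph{not} modify the internals of \dpsoa; instead it re-runs the oblivious algorithm \emph{from scratch} at every round $t$ on the prefix $((x_1,y_1),\ldots,(x_{t-1},y_{t-1}))$, with fresh random bits each time, takes the last hypothesis of that run as $h_t$, and then invokes the standard oblivious-to-adaptive reduction (\cite[Lemma~4.1]{cesa2006prediction}) for the regret and \cref{lem:composition} for privacy. The $\sqrt{T}$ comes entirely from shrinking the per-round privacy parameter to $\epsilon'\asymp \epsilon/\sqrt{T\ln(1/\delta)}$ and then reading off the oblivious mistake bound at that smaller $\epsilon'$.

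The flaw in your plan is the sentence ``this part is unaffected by adaptivity since it uses only realizability.'' The global-stability argument behind \cref{lem:Gstability} does not use realizability alone: it uses that there is a \emph{single} $h^\star$ consistent with the whole stream and, crucially, that $h^\star$ and the internal random seed $\Xi=(\pi,\{y_v\})$ are independent of the data. Against an adaptive adversary the examples $(x_t,y_t)$ are functions of $h_1,\ldots,h_{t-1}$, which in turn depend on the lists $L_1,\ldots,L_{t-1}$, which depend on $\Xi$; so the consistent hypothesis can drift with the algorithm's own coins, and the Chernoff/permutation arguments that establish a $2^{-O(k_1)}$-frequent hypothesis no longer apply. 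The paper says exactly this in the paragraph ``Adaptive adversaries'' in \cref{sec:alg}. Consequently your claim that the number of ``genuine state changes'' stays $2^{O(2^d)}$ under adaptive probing is unsupported, and with it the whole per-episode stall accounting. The fix the paper uses---re-initialising all randomness each round---is precisely what makes the prediction distribution at time $t$ a function only of the past data, which is the hypothesis of the Cesa--Bianchi--Lugosi reduction; your scheme, which keeps the forest and $\Xi$ across rounds, does not satisfy that hypothesis.
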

\cref{thm:main:adaptive} provides a sublinear regret bound, which is in fact optimal for the agnostic case. However, in the non-private (realizable) case it is known that constant regret can be obtained\footnote{and as discussed, the adversary may even depend on $h_t$ at round $t$} . We leave it as an open problem whether one can achieve logarithmic regret in the realizable adaptive setting.
\section{Algorithm}\label{sec:alg}
We next present our main algorithm for an oblivious, realizable online private learning algorithm.
\begin{algorithm}
\begin{algorithmic}
\STATE \textrm{Input} $(\epsilon,\delta)$, $k_1,k_2$. 
\STATE Set $\eta=\consteta$, and $c=\constc$
\STATE Let $G=(V,E)$ be a forest of $k_2$ full binary trees, each with $k_1$ leaves.
\STATE Let $\pi:T\to \mathrm{Leaves}(V)$ be a random mapping that maps $t\in [T]$ to a random leaf. 
\STATE Set $S_v=\emptyset$ for each leaf $v$ and $S_u=\perp$ for each non-leaf vertex $u$ (where we define $A(\perp)=\perp$). 
\STATE Initialize $\V_1$ to be the set of all leaves in the forest.
\STATE set $\s{v_1}{i}$ be an arbitrary leaf from the tree $G_i$, for each $i\in [k_2]$
\FOR{t=1 to T}
    \STATE Run $\svto_{\epsilon,\delta,\eta,c}(h_{t-1},L_t)$ on the List $L_t=\{A(S_{\s{v_t}{i}})\}_{i=1}^{k_2}$ and receive $h_t$
    \STATE Predict $h_t(x_t)=\hat y_t$, and observe $y_t$.
        \STATE Choose $v_1\in \V_t$ to be an antecedent of leaf $\pi(t)$ \%\small{there exists a unique antecedent in $\V_t$}\normalsize
        \STATE Set $v_2=\sib(v_1)$ (if $v_1$ is the root, continue to the next iteration).
        \STATE Set $(S_{v_1},(x_t,y_t)) \to S_{v_1}$.
        \WHILE{$A(S_{v_1})\ne A(S_{v_2})$ AND $v_1,v_2 \in \V_t$}
            \STATE Set $\bar v$ to be the parent of $v_1,v_2$
            \STATE Choose an arbitrary $x_{\bar v}$ such that $A(S_{v_1})[x_{\bar v}]\ne A(S_{v_2})[x_{\bar v}]$ and $y_{\bar v}$ randomly
            \STATE Set $(S_{v_i},(x_{\bar v},y_{\bar v}))\to S_{\bar v}$ where $i$ is such that $A(S_{v_i})[x_v]\ne y_v$.
            \STATE Remove $v_1$,$v_2$ from $\V_t$ and add $\bar v$ to $\V_t$.
                \STATE Let $v_1$ be $\bar v_t$
                \IF{$v_1$ is not the root}
                    \STATE Set $v_2$ to be the sibling of $v_1$
                \ELSE
                    \STATE Set $v_1=v_2$ (and hence exit the loop.)
                \ENDIF
        \ENDWHILE
        \IF{The While loop was executed at least once}
        \STATE Let $i$ be the tree for which $\pi(t)$ belongs to.
        \STATE Choose randomly a vertex $v$ in tree $i$ such that $v,\sib(v)\in \V_t$ and $A(S_v)=A(S_{\sib(v)})$ (break ties by choosing randomly). 
        \STATE (If no such $v$ exists, let $v$ be the root and set $S_v$ to be some sample for which $A(S_v)=\perp$, add the root to $\V_t$ and remove all other vertices that belong to tree $i$).
        \STATE Set $\s{v_{t+1}}{i'}=\begin{cases} v &i=i'\\ \s{v_{t}}{i'} & i\ne i'\end{cases}$.
        \ELSE \STATE Set $\s{v_{t+1}}{i'}=\s{v_{t}}{i'}$ for all $i'\le k_2$. \ENDIF 
        \STATE Set $\V_{t+1}=\V_t$.
\ENDFOR
\end{algorithmic}
\caption{\dpsoa}\label{alg:stablesoa}
\end{algorithm}
The algorithm, \dpsoa, assumes access to a 
mistake bound algorithm for the class $\cH$ (not necessarily private) such as \soa as in \cite{littlestone1988learning}, which we denote by $A$,\footnote{In particular, $A$ is required to be an algorithm that achieves a mistake bound of at most $d$ on hypothesis classes of Littlestone dimension $d$. We will use the following (easily verified) fact about such an algorithm: after making a mistake, the algorithm must change the hypothesis it outputs for the following round.} as well as call a procedure \svto that is depicted below (\cref{alg:svtorhist}).
We can think of \dpsoa as an algorithm that runs several copies of the same procedure, where each copy is working on its own subsequence of $(x_1, y_1), \ldots, (x_T, y_T)$, and the sub sequences form a random partition of the entire sequence. 

Each process can be described by a tree whose vertices are labelled by samples that are iteratively constructed. Each tree outputs a predictor according to the state of its vertices. Hence, overall the algorithm can be depicted as a forest, where at each iteration an example is randomly assigned to one of the trees, and that tree, in turn, makes an update. 

At each time step, we maintain a set of vertices $\V_t$, which we will call \emph{pertinent} vertices. Each pertinent vertex $v$ holds a sample $S_v$. At time $t = 1$ only the leaves are in $\V_1$, and each leaf $v$ is assigned the sample $S_v=\emptyset$. Then, at every time-step where an example $(x_t, y_t)$ is assigned to the tree, it is randomly assigned to a pertinent vertex $v$ in $\V$ (in detail, it is first randomly assigned to a leaf and then propagated to a pertinent ancestor),
and the sample $S_v$ is updated to $(S_v,(x_t,y_t))$. After that, as we next describe, a process starts that updates the set of pertinent vertices; 
this process follows the idea of the tournament examples presented in \cite{bun20equivalence}.

Whenever two siblings $v,\sib(v)$ are pertinent and assigned with sequences $S_v$ and $S_{\sib(v)}$, respectively, they stay pertinent as long as $A(S_v)=A(S_{\sib(v)})$, and samples are assigned to them at their turn via the process depicted above. Whenever it becomes the case that $A(S_v) \neq A(S_{\sib(v)})$, let $\bar v$ denote the parent of $v, \sib(v)$; we consider an example $x_{\bar v}$ on which $A(S_v),\ A(S_{\sib(v)})$ disagree, and guess its label $y_{\bar v}$. Then, $v, \sib(v)$ are removed from the set of pertinent vertices, their parent $\bar v$ becomes pertinent, and we set $S_{\bar v}$ to equal  $(S_v,(x_v,y_v))$ if $A(S_v)[x_v]\ne y_v$, and $(S_{\sib(v)},(x_v,y_v))$ otherwise. Once this procedure finishes, the tree outputs (randomly) some hypothesis $h=A(S_v)$ where $v$ is a pertinent vertex. The hypothesis will change only when the state of the tree changes (note that at initialization, the tree outputs $A(\emptyset)$). 

\begin{algorithm}
\begin{algorithmic}
\STATE \textbf{Initialize:} parameters $\epsilon,\eta,\delta,c$.
\STATE Let $\sigma=2c/(k\epsilon)$, $\theta=1-3\eta/32$
\STATE Let $\theta_0=\theta+\lap(\sigma)$.
\STATE Let $\counter=1$
\STATE For list $L_1$ set $h_1=\hist_{\epsilon/(2c,\delta/c,\eta)}(L_1)$.
\FOR{$t=1,\ldots, T$:}
\STATE Define  query:
$Q_t=1-\freq_{L_t}(h_{t-1}).$
\STATE Let $\nu_i=\lap(2\sigma)$
\IF{$Q_t+\nu_i \ge \theta_{\counter}$}
    \STATE Set $h_t=\hist_{\epsilon/(2c),\delta/c,\eta}(L_t)$
    \STATE $\counter=\counter+1$
    \STATE $\theta_\counter=\theta+\lap(\sigma)$.
\ELSE
    \STATE Set $h_t=h_{t-1}$
\ENDIF
    \IF{$\counter\ge c$} \STATE ABORT \ENDIF
\ENDFOR
\end{algorithmic}
\caption{\svto: Receives a sequence of $1$-sensitive lists $L_1(D),\ldots, L_T(D)$.}\label{alg:svtorhist}
\end{algorithm}

\subsection{Technical Overview}
We next give a high level overview of our proof techniques. We focus until the end of this section on the oblivious realizable case. The main procedure of the algorithm, \dpsoa, is  \cref{alg:stablesoa}. 

Our proof strategy is similar to the approach of \citet{bun20equivalence} for learning privately in the stochastic setting, which we next briefly describe. In the stochastic setup, the idea was to rely on \emph{global stability}. In a nutshell, a randomized algorithm is called globally stable if it outputs a certain function with constant probability (
over the random bits of the algorithm as well as the random i.i.d sample). Once we can construct such an algorithm (with sufficiently small error) 
we run several copies of the algorithm on separate samples, and then we can use any mechanism, such as the one in Theorem \ref{thm:hist} below, that publishes (privately) an estimated histogram of the frequency of appearance of each function. In detail, given a list $L=\{x_1,\ldots, x_k\}$ we denote by $\freq_L$ the mapping
\[ \freq_L(f) = \frac{1}{k} \sum_{x\in L} \mathbf{1}[x=f].\] 
\begin{theorem}[\cite{bns15} essentially Proposition 2.20]\label{thm:hist}
For every $\epsilon,\delta$ and $\eta$, there exists a $(\epsilon,\delta)$-DP mechanism $\hist_{\epsilon,\delta,\eta}$ that given a list $L=\{x_1,\ldots, x_k\}$, outputs a mapping $\efreq_L:\X\to [0,1]$ such that if
\begin{equation}\label{eq:hist} k\ge \consthist:= 4/\eta+\frac{\log 1/(\eta^2\beta \delta)}{\eta \epsilon}=O\left(\frac{\log 1/\eta\beta\delta}{\eta \epsilon}\right),\end{equation}
then with probability $(1-\beta)$:
\begin{itemize}
    \item If $\efreq_L(x)>0$ then $\freq_L(x)> \frac{\eta}{4}$.
    \item For every $x$ such that $\freq_L(x)> \eta$, we have that $\efreq_L(x)>0$.
\end{itemize}
\end{theorem}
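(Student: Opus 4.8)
\emph{Proof proposal.} The plan is to instantiate the standard stability-based ``stable histogram'' construction, following \citep{bns15}. The first observation is that although $\X$ may be enormous, at most $k$ distinct elements occur in $L$, and only those can have positive $\freq_L$; hence the mechanism only ever needs to handle $\le k$ coordinates. The mechanism is: for each $x$ occurring in $L$, form a noisy frequency $\wh f(x) = \freq_L(x) + \lap(\sigma)$ with $\sigma = \Theta(1/(k\epsilon))$; fix a threshold $\tau$ strictly between $\eta/4$ and $\eta$, say $\tau = \eta/2$; and set $\efreq_L(x) = \wh f(x)$ (clipped to $[0,1]$) if $\wh f(x) \ge \tau$, and $\efreq_L(x) = 0$ otherwise (for all $x$ not occurring in $L$, output $0$).

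For privacy, I would split the analysis of adjacent lists $L, L'$ (differing in a single example, so their supports differ by at most one element on each side and every shared element has $|\freq_L(x) - \freq_{L'}(x)| \le 1/k$) into two parts. (i) Condition on the $\le 1$ element of $\mathrm{supp}(L)\setminus\mathrm{supp}(L')$ not being released by the $L$-run; on that event the output is a function of the noisy frequencies of elements in $\mathrm{supp}(L')$ only, whose true frequencies move by at most $1/k$ under a swap (at most two such coordinates), so a basic Laplace/composition argument with $\sigma \gtrsim 1/(k\epsilon)$ gives $\epsilon$-indistinguishability. (ii) The element $x^\star$ occurring in $L$ but not $L'$ has $\freq_L(x^\star) = 1/k \le \eta/4 < \tau$ (using $k \ge 4/\eta$), so by the lower tail of $\lap(\sigma)$ the probability that $\wh f(x^\star) \ge \tau$ is at most $\tfrac12\exp(-\Omega(\eta k\epsilon))$, which is $\le \delta$ once $k \gtrsim \tfrac{1}{\eta\epsilon}\log(1/\delta)$; this is precisely the additive $\delta$. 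Combining (i) and (ii) (and symmetrizing) yields $(\epsilon,\delta)$-DP.

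For utility, I would union-bound the Laplace tails over the $\le 4/\eta$ elements with $\freq_L(x) > \eta/4$ (the only ones that can plausibly be released) together with the $\le 1/\eta$ elements with $\freq_L(x) > \eta$: with probability $\ge 1-\beta$, all of these noisy estimates lie within $\eta/4$ of the truth, provided $\sigma\log(1/(\eta\beta)) \lesssim \eta$, i.e.\ $k \gtrsim \tfrac{1}{\eta\epsilon}\log(1/(\eta\beta))$. On this event: if $\efreq_L(x) > 0$ then $\wh f(x) \ge \tau = \eta/2$, hence $\freq_L(x) \ge \eta/2 - \eta/4 = \eta/4$, and a slight shrink of the slack makes this strict; and if $\freq_L(x) > \eta$ then $\wh f(x) > \eta - \eta/4 = 3\eta/4 > \tau$, so $x$ is released and $\efreq_L(x) > 0$. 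Taking $k \ge \consthist$ makes the privacy requirement of (ii) and the two utility requirements hold simultaneously.

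The main obstacle is the privacy step: one has to chain the noise scale $\sigma$, the gap between $1/k$ and the threshold $\tau$, and the target failure probability $\delta$ correctly so that the required $k$ comes out exactly of the stated order $O\!\big(\tfrac{\log(1/(\eta\beta\delta))}{\eta\epsilon}\big)$, and one has to make sure the ``condition on the extra element not being released'' step composes cleanly with the $\epsilon$-indistinguishability on the shared coordinates. This is the usual subtlety in stability-based DP arguments, and is the point at which one invokes the precise statement of \citep{bns15}.
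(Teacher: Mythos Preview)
The paper does not prove this theorem: it is stated as a black-box citation to \cite{bns15} (the ``stable histogram'' mechanism) and used as a tool, with no proof given. So there is nothing in the paper to compare your argument against; your sketch is essentially the standard construction behind that citation, and the overall shape (Laplace noise on the at-most-$k$ occurring elements, threshold at $\tau\in(\eta/4,\eta)$, stability-based $\delta$ for the element unique to one dataset, Laplace $\epsilon$-closeness on shared coordinates) is correct.

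One genuine gap in your utility step: to get the first bullet (``if $\efreq_L(x)>0$ then $\freq_L(x)>\eta/4$'') you must rule out that \emph{any} element with $\freq_L(x)\le \eta/4$ crosses the threshold. Your union bound is only over the $\le 4/\eta$ elements with $\freq_L(x)>\eta/4$, but controlling their noise says nothing about the light elements; the parenthetical ``the only ones that can plausibly be released'' is exactly what you are trying to prove, not something you can assume. The fix is to union-bound over all $\le k$ elements in the support of $L$: for each such $x$ with $\freq_L(x)\le \eta/4$ one needs $\Pr[\lap(\sigma)\ge \eta/4]\le \beta/k$, which yields $k\gtrsim \tfrac{1}{\eta\epsilon}\log(k/\beta)$. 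This still solves to $k=O\big(\tfrac{1}{\eta\epsilon}\log\tfrac{1}{\eta\beta}\big)$ (the extra $\log k$ is absorbed), matching the stated $\consthist$; but the union bound must be over $k$ coordinates, not $4/\eta$.
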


Our algorithm follows a similar strategy but certain care needs to taken due to the sequential (and distribution-free) nature of the data, as well as the fact that using \hist procedure $T$ times may be prohibitive (if we wish to obtain logarithmic regret). We next review these challenges:

\myparagraph{Global Stability}  Our first task is to construct an online version of a globally stable algorithm, which roughly means that different copies of the same algorithm run on disjoint subsequences of $(x_1,y_1), \ldots, (x_T, y_T)$, and output a fixed hypothesis which may depend on the whole sequence but not on the disjoint subsequences. \dpsoa does so by assigning each subsequence to a tree which is running the procedure described in \cref{sec:alg}. We now explain how this procedure induces the desired stability.

As in \cref{sec:alg}, recall that a vertex $v$ is pertinent if it is in the set $\V_t$. We will refer to the distance of a vertex to any of its leaves as that vertex's \emph{depth}. 
Note that for each pertinent vertex $v$ at depth $k$, the algorithm makes $k$ mistakes on the sequence $S_v$ -- indeed, whenever a vertex $\bar v$ is made pertinent, we always append to $S_{\bar v}$ an example which forces a mistake for the sequence of a child of $\bar v$. Also, notice that with probability $2^{-2k_1}$, where $k_1$ is the number of leaves in the tree, all sequences assigned to each pertinent vertex are consistent with the realized hypothesis $h^\star$ (recall that we are considering here the oblivious realizable case, hence $h^\star$ is well-defined). Indeed, this is true as as long as we guessed the label $y_{\bar v}$ to equal $h^\star(x_{\bar v})$ at each round; 
the number of guesses is bounded by the number of vertices, which is $2k_1-1<2k_1$. Ultimately, this allows two cases: in the first case a vertex of depth $d$ is pertinent: in this case the vertex must identify $h^\star$ (indeed, if there are two different hypotheses that are consistent on a sample with $d$ mistakes, then we can force a $(d+1)$th mistake). So, if there are ``many" trees with a $d$-depth pertinent vertex, then fraction of $2^{-2k_1}$ of them, are outputting $h^\star$, hence we found a frequent hypothesis. The second case is that in ``many" of the trees, for some $k<d$, there are many pairs $v,\sib(v)$ of pertinent vertices at depth $k$ so that $A(S_v)=A(S_{\sib(v)})$; we will refer to such a pair $v, \sib(v)$ as a \emph{collision}. 

In the batch case the latter case immediately implies that some hypothesis is outputted frequently (i.e., we get global stability) through a standard concentration inequality that relates the number of collisions between i.i.d random variables, and the frequency of the most probable hypothesis. In the online case it is a little bit more subtle as the examples are not i.i.d, hence the sequences for the pertinent vertices are not i.i.d copies of some random variable.
%
However, suppose that there are many collisions at depth $k$, and that we now reassign the data by randomly permuting the $k$-depth subtree (i.e. we reassign a random parent to each vertex at depth $k$, in order to form a new complete binary tree, and we don't change relations at other depths). Since the assignment of the data $(x_t, y_t)$ to the leaves is invariant under permutation, we can think of this process as randomly picking a new assignment, conditioning on the $k$-th level structure of the trees. 
Alternatively, we can also think of this process as randomly picking without replacement the different hypotheses outputed by the $k$-depth vertices, and counting collisions of siblings. 

We now want to relate the number of collisions to their expected mean and obtain a bound on the most frequent hypothesis. We can do this using a variant of Mcdiarmid's inequality for permutations -- or sampling without replacement. The observation for this inequality was found in \cite{120163} which attributes it to \citet{talagrand1995concentration}. For completeness we provide the proof in \fref{prf:mcreplacement}.
\begin{lemma}[Mcdiarmid's without replacement]\label{lem:mcreplacement}
Suppose $\bar Z=(Z_1,\ldots, Z_n)$ are random variables sampled uniformly from some universe $\Z=\{\s{z}{1},\ldots, \s{z}{N}\}$ without replacement (in particular $n\le N$). Let $F: Z^n \to [0,1]$ be a mapping such that for $\bar z=(z_1,\ldots, z_n)$ and $\bar z'=(z'_1,\ldots, z'_n)$ that are of Hamming distance at most $1$, $|F(\bar z)- F(\bar z')|\le c.$
Then:
\[ \P\left(\E(F(\bar Z))-F(\bar Z)\ge \epsilon\right)\le e^{-\frac{2\epsilon^2}{9nc^2}}.\]
\end{lemma}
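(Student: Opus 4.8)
The plan is to reduce the without-replacement case to the classical bounded-differences martingale argument, paying a constant-factor loss in the exponent to account for the dependence introduced by sampling without replacement. I would first set up the Doob martingale associated with $F(\bar Z)$: let $\mathcal{F}_i = \sigma(Z_1, \dots, Z_i)$ and $M_i = \E[F(\bar Z) \mid \mathcal{F}_i]$, so that $M_0 = \E[F(\bar Z)]$ and $M_n = F(\bar Z)$. The goal is to bound the martingale difference $D_i = M_i - M_{i-1}$ in sup-norm and then apply the Azuma--Hoeffding inequality. The key structural fact is that the distribution of $(Z_{i}, \dots, Z_n)$ conditioned on $(Z_1, \dots, Z_{i-1}) = (z_1, \dots, z_{i-1})$ is simply a uniform sample without replacement of size $n-i+1$ from the remaining universe $\Z \setminus \{z_1, \dots, z_{i-1}\}$, and this conditional law is \emph{exchangeable} in its coordinates.

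The heart of the argument is to show that $|D_i| \le 3c/2$ (or some absolute constant times $c$ — the precise constant is what yields the $9$ in the denominator $9nc^2$; with Azuma this gives $\exp(-2\epsilon^2/(n \cdot (3c)^2)) \cdot$-type bounds, so I expect the intended bound on $|D_i|$ to give exactly $\sum (2 D_i)^2 \le 9 n c^2$). The standard trick: fix a prefix $z_1, \dots, z_{i-1}$ and compare the conditional expectation when $Z_i = a$ versus $Z_i = b$ for two possible continuations $a, b$ in the remaining universe. One couples the two conditional distributions of $(Z_{i+1}, \dots, Z_n)$: under the first, we sample without replacement from $R \setminus \{a\}$; under the second, from $R \setminus \{b\}$ (where $R$ is the remaining universe after the prefix). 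These two sampling processes can be coupled so that the two resulting tuples agree except in at most one coordinate (the coordinate that, in one process, happens to select $b$ can be swapped with the slot holding $a$ in the other, since $b \notin R\setminus\{b\}$ but $b \in R \setminus\{a\}$, and symmetrically). Under such a coupling, the two full tuples $(a, Z_{i+1}, \dots)$ and $(b, Z'_{i+1}, \dots)$ differ in Hamming distance at most $2$: position $i$ itself, plus the one swapped later coordinate. Hence by the bounded-differences hypothesis the difference of $F$-values is at most $2c$, so averaging, $|\E[F \mid Z_1^{i-1}, Z_i = a] - \E[F \mid Z_1^{i-1}, Z_i = b]| \le 2c$, and since $D_i$ is a difference of such conditional expectations (marginalizing over $Z_i$) its range is at most $2c$. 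Combining with a cleaner version of the coupling that achieves Hamming distance $3/2$ in an averaged sense — or simply absorbing constants — gives $\sum_{i=1}^n (\text{range of } D_i)^2 \le 9 n c^2$, and Azuma--Hoeffding yields $\P(M_0 - M_n \ge \epsilon) \le \exp(-2\epsilon^2/(9nc^2))$.

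The main obstacle I anticipate is pinning down the coupling so that the resulting constant in the exponent is exactly $9$ rather than, say, $8$ or $16$; this is purely a matter of being careful about whether the per-step difference bound is $2c$, $3c/2$, or $c$, and how the Azuma constant ($\sum (b_i - a_i)^2$ with $b_i - a_i$ the range) interacts with it. A secondary subtlety is making the coupling argument rigorous: one wants to realize both conditional laws on a common probability space via the ``sample a uniformly random permutation of $R$, read off the first $n-i$ entries, skipping the forbidden element'' construction, and then argue the two readouts differ in at most one position. I would handle this by the standard device of generating a single uniform random bijection from $\{1,\dots,|R|\}$ to $R$ and defining both processes as deterministic functions of it (one skips $a$, the other skips $b$), then checking position-by-position that they coincide except where one has $a$ and the other has $b$. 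Everything else — the martingale setup, exchangeability giving that $D_i$ depends only on the conditional law, and the final invocation of Azuma--Hoeffding — is routine, so I would state those briefly and spend the bulk of the writeup on the coupling/difference-bound step. Since the excerpt says the proof is deferred to an appendix (``\fref{prf:mcreplacement}''), a clean self-contained version along these lines is exactly what is needed.
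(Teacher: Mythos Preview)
Your approach is correct but takes a different route from the paper's. You set up the Doob martingale on the filtration $\sigma(Z_1,\ldots,Z_i)$ and bound each increment via a coupling of the conditional laws given $Z_i=a$ versus $Z_i=b$; the swap coupling you describe (replace $b$ by $a$ in the tail sample if it appears) is valid and yields Hamming distance at most $2$ between the two full tuples, hence range at most $2c$ for each $D_i$. Azuma--Hoeffding then gives $\exp\bigl(-2\epsilon^2/(4nc^2)\bigr)$, which is \emph{stronger} than the stated bound with constant $9$. Your worry about ``pinning down'' the constant to exactly $9$ is misplaced: you would prove more, not less, so there is nothing to patch.

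The paper proceeds differently and more directly. It writes a uniform random permutation in Fisher--Yates form, $\pi=(N,a_N)\circ(N-1,a_{N-1})\circ\cdots\circ(2,a_2)$, where the $a_i$ are \emph{independent} with $a_i$ uniform on $\{1,\ldots,i\}$. Then $\bar Z=(z^{(\pi(N))},\ldots,z^{(\pi(N-n+1))})$ is a deterministic function $G(a_N,\ldots,a_{N-n+1})$ of $n$ independent variables. Changing a single $a_i$ replaces the factor $(i,a_i)$ by $(i,a_i')$, so $\pi'$ differs from $\pi$ by a conjugate of the product $(i,a_i')(i,a_i)$, a permutation moving at most $3$ points; hence $G$ has bounded differences $3c$. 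The \emph{standard} McDiarmid inequality applied to $F\circ G$ then yields exactly $\exp\bigl(-2\epsilon^2/(9nc^2)\bigr)$. In short: the paper reduces to the independent case via a clever reparametrization and invokes classical McDiarmid as a black box, at the price of the constant $9$; your martingale-plus-coupling argument works directly with the $Z_i$'s, needs the coupling to be spelled out, but delivers the better constant $4$.
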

We use \cref{lem:mcreplacement} as follows: our function $F$ counts the number of collisions between depth $k$ vertices after a random permutation (where we think here of permutation as sampling without replacement), this function is $1$-sensitive to changing a single element, as required. We thus obtain an estimate of the number of collisions for a random permutation, which we can relate to the appearance of the most frequent hypothesis.

The above calculation can be used to obtain a guarantee that there exists an hypothesis that appears at frequency $2^{-O(k_1)}$ (this frequency is roughly the probability that the tree remains consistent with $h^\star$). Since the number of leaves is exponential in the depth, and the depth needs to be at least $d$ (the upper bound on the level at which the algorithm stabilizes for sure), we overall obtain doubly exponential dependence of the frequency on the Littlestone dimension.

\myparagraph{Mistake Bound}
We next turn to bound the number of mistakes. The crucial observation is that every time the algorithm makes a mistake, if example $x_t$ is assigned to tree $i$ then with some positive probability (specifically, the frequency of $h_t$, lower bounded by $2^{-O(2^{d})}$) tree $i$ outputs $h_t$. Moreover, with probability $1/k_1>0$, $x_t$ is assigned to the pertinent vertex that made the mistake. Once the example is assigned to this vertex, we have $A((S_v, (x_t,y_t)))\ne A(S_{\sib(v)})$. In particular, the two siblings are taken out of the list of pertinent vertices, and their parent becomes pertinent. In other words, every time the algorithm makes a mistake with some constant probability (roughly $2^{-\tilde O(2^{d})}$), the set of pertinent vertices diminishes by one. Since we start with finite number of leaves as pertinent vertices, the expected number of mistakes is bounded by the number of leaves in the forest.

It remains to show that the number of leaves in the forest is logarithmic in the sequence size (but doubly exponential in the Littlestone dimension). The number of leaves is roughly $k_1$ (which is roughly $O(2^d)$) times the number of trees in the forest; this number of trees depends on the sample complexity of the private process in which we output the frequent hypothesis. We now explain why roughly $O(2^{O(2^d)}\ln T)$ trees is sufficient. 

\myparagraph{Online publishing of a globally stable hypothesis} The next challenge we meet is to output the frequent hypothesis. The most straightforward method to do that is to repeat the idea in the batch setting and use procedure \hist. We can guarantee a $O(\sqrt{T})$ factor of deterioration in the privacy parameter $\ep$ (see \cref{lem:composition}) due to the repeated use of the \hist procedure $T$ times.

Our main observation though, is that in most rounds, the frequent hypothesis does not change, allowing us to exploit the \emph{sparse vector technique} \cite{dwork2009complexity}, (see also \cite{dr14}). The sparse vector technique is a method to answer, adaptively, a stream of queries where: whenever the answer to the query does not exceed a certain threshold the algorithm returns a negative result but \emph{without} any cost in privacy. We pay, though, in each round where the query exceed the threshold.

We will exploit this idea in the following setting: 
we receive a stream of $1$-sensitive lists $L_1(S),\ldots, L_T(S)$: Namely, each list $L_t$ is derived from the data $S = \{ (x_1, y_1), \ldots, (x_T, y_T)\}$, and $L_t$ changes by at most one element, given a change in a single $(x_t, y_t)$. 
We assume that at each iteration $t$ we want to output an element $h_t\in L_t$ with high frequency. Our key assumption is that the lists are related and a very frequent element $h_t$ is also frequent at step $t+1$. Thus in most rounds we just verify that $\freq_{L_t}(h_{t-1})$ is large, and only in rounds where it is too small do we use the stable histogram mechanism, paying for privacy.

Indeed, in our setting, the appearance of the frequent hypothesis may diminish by at most one each round. Once its frequency has diminished by a certain factor, 
then we have already made a certain fraction of the maximum possible number of mistakes. Thus, in general we only need to verify that the frequency of $h_{t-1}$ in $L_t$ is sufficiently large each round, which can be done via the sparse vector technique without loss of privacy. We next state the result more formally, the proof is provided in \fref{prf:histsparse}

\begin{lemma}\label{lem:histsparse}
Consider, the procedure $\mathrm{\svto}_{\eta,c,\epsilon}$ depicted in \cref{alg:svtorhist}.
Given a sample $S$, suppose \cref{alg:svtorhist} receives a stream of lists, where each list is a function of $S$ to an array of elements and each list is $1$-sensitive. Then \cref{alg:svtorhist} is $(\epsilon,\delta)$ differentially private and: Set
\begin{equation}\label{eq:sparse} \constsparse:= \frac{8c(\ln T+\ln 2c/\beta)}{\alpha\epsilon},\end{equation}
and suppose: 

\begin{equation}\label{eq:histsparse} k \ge
\consthistsparse:=
\max\{\constsparse,\consthist\}=\tilde O\left(\frac{c\ln T/\beta\delta}{\eta \epsilon}\right),\end{equation}
The procedure then outputs a sequence $\{h_t\}_{t=1}^T$, where $h_t\in L_t$ such that if
%
    for each list $L_t$ there exists $h$ such that $\freq_{L_t}(h)\ge \eta$
%
then with probability at least $(1-2\beta)$, for all $t\le T$, either the algorithm aborted before step $t$ or

\begin{itemize}
    \item     $\freq_{L_{t}}(h_t) \ge \eta/16.$
    \item If $h_{t-1}\ne h_{t}$:
    \[\freq_{L_{t}}(h_{t-1}) \le \eta/8 \quad \mathrm{and}\quad  \freq_{L_{t}}(h_{t})\ge \eta/4.\]

\end{itemize}
\end{lemma}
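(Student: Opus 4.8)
The plan is to read \cref{alg:svtorhist} as the adaptive composition of two standard differentially private primitives — the sparse-vector (AboveThreshold) mechanism and the stable histogram mechanism $\hist$ — and to prove the accuracy claim by conditioning on a constant number of high-probability noise and concentration events. For privacy I would first decompose the released object $(h_1,\dots,h_T)$ (together with the abort time) into: (i) the sequence of above-/below-threshold decisions produced in the $T$ rounds; and (ii) the outputs of the at most $c$ calls to $\hist$ (the one before the loop, plus at most $c-1$ inside it, since $\counter$ starts at $1$ and the loop aborts once $\counter$ reaches $c$). For (i): because each list $L_t$ is $1$-sensitive, for any fixed value of $h_{t-1}$ the query $Q_t = 1-\freq_{L_t}(h_{t-1})$ has sensitivity $1/k$ in $S$; with a fresh noisy threshold $\theta+\lap(\sigma)$ resampled after each positive answer, per-query noise $\lap(2\sigma)$, $\sigma = 2c/(k\epsilon)$, and at most $c$ positive answers, this is precisely the Sparse/AboveThreshold mechanism, whose standard analysis shows the decision sequence is $(\epsilon/2)$-differentially private. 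For (ii): each $\hist$ call is $(\epsilon/(2c),\delta/c)$-DP and there are at most $c$ of them, so by basic composition the histogram releases are jointly $(\epsilon/2,\delta)$-DP. Composing (i) and (ii) yields $(\epsilon,\delta)$-DP for the whole algorithm.

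The subtlety I would be most careful about — and which I expect to be the main obstacle — is that $Q_t$ depends on $h_{t-1}$, which is in general the output of an \emph{earlier} $\hist$ call interleaved among the threshold tests, so the two primitives above are not independent. I would resolve this by carrying out the entire argument inside the adaptive-composition framework: regard round $t$ as releasing the pair (threshold bit, current hypothesis), observe that conditioned on everything released before round $t$ the value $h_{t-1}$ is determined and $Q_t$ still has $S$-sensitivity $1/k$, and invoke the facts that basic composition and the coupling argument behind AboveThreshold both remain valid when a later query or mechanism depends on earlier outputs.

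For utility, set $\gamma := \eta/64$ and condition on two events. Event $E_1$: every threshold noise $\lap(\sigma)$ (at most $c+1$ of them) and every query noise $\lap(2\sigma)$ (at most $T$ of them) has magnitude at most $\gamma$; a union bound gives $\Pr[E_1]\ge 1-\beta$ as soon as $k\ge\constsparse$, since this choice forces $\sigma$ below $\gamma/\log(Tc/\beta)$ (the constant in \cref{eq:sparse} being calibrated for this, with $\alpha$ there playing the role of a fixed constant times $\eta$). Event $E_2$: each of the at most $c$ calls $\hist_{\epsilon/(2c),\delta/c,\eta}(\cdot)$ satisfies both conclusions of \cref{thm:hist}; a union bound over the calls gives $\Pr[E_2]\ge 1-\beta$ once $k\ge\consthist$, where the $\tilde O$ in \cref{eq:histsparse} already absorbs the extra $c$ and $\log c$ factors that arise from splitting the privacy budget and the confidence across the calls. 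Hence $\Pr[E_1\cap E_2]\ge 1-2\beta$, and it remains to establish the two displayed bullets deterministically on $E_1\cap E_2$, for each round $t$ reached before an abort.

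On $E_1\cap E_2$ I argue round by round, splitting on the threshold decision at round $t$. If round $t$ is above threshold, $h_t$ is a fresh output of $\hist(L_t)$: by hypothesis some $h$ has $\freq_{L_t}(h)\ge\eta$, so by the second bullet of \cref{thm:hist} the output mapping $\efreq_{L_t}$ is positive somewhere, $h_t$ is taken to be such a point, and the first bullet of \cref{thm:hist} gives $\freq_{L_t}(h_t)>\eta/4$; moreover the above-threshold test $1-\freq_{L_t}(h_{t-1})+\nu_t\ge\theta_{\counter}\ge\theta-\gamma = 1-\tfrac{3\eta}{32}-\gamma$ yields $\freq_{L_t}(h_{t-1})\le\tfrac{3\eta}{32}+2\gamma = \eta/8$. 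If round $t$ is below threshold, then $h_t = h_{t-1}$ and $1-\freq_{L_t}(h_{t-1})+\nu_t<\theta_{\counter}\le\theta+\gamma$, so $\freq_{L_t}(h_t)=\freq_{L_t}(h_{t-1})>\tfrac{3\eta}{32}-2\gamma = \eta/16$. Since $h_{t-1}\ne h_t$ can occur only on an above-threshold round, the first bullet ($\freq_{L_t}(h_t)\ge\eta/16$ always, using $\eta/4>\eta/16$ in the above-threshold case) and the second bullet ($\freq_{L_t}(h_{t-1})\le\eta/8$ together with $\freq_{L_t}(h_t)\ge\eta/4$ when $h_{t-1}\ne h_t$) both follow; the pre-loop call $\hist(L_1)$ is handled as a base case directly from $E_2$, and rounds at or after an abort are excluded by the statement. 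Once $E_1$ and $E_2$ are in place this deterministic part is mechanical, so the privacy accounting across the interleaved primitives is the genuinely delicate step.
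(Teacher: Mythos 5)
Your proposal is correct and follows essentially the same route as the paper: privacy via adaptive composition of the AboveThreshold/sparse-vector mechanism with the at most $c$ calls to $\hist$ (each on a $(\epsilon/(2c),\delta/c)$ budget), and utility via the AboveThreshold accuracy guarantee with slack $\alpha=\eta/32$ around $\theta=1-3\eta/32$ combined with the $\hist$ guarantee, yielding exactly the $\eta/16$, $\eta/8$, $\eta/4$ thresholds. The only difference is presentational: the paper invokes the accuracy of AboveThreshold as a black box (Theorem 3.26 of Dwork--Roth), whereas you unfold it by bounding each Laplace noise by $\eta/64$.
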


\paragraph{Adaptive adversaries}
The proof for the oblivious case relies on the existence of an $h^\star$ that is consistent with the data (and independent of the random bits of the algorithm). In the adaptive case, while the sequence has to be consistent, $h^\star$ need not be determined, and the consistent hypothesis may depend on the algorithm's choices.

However, to obtain a regret bound, we rely on the standard reduction that shows that a randomized learner against oblivious adversary, can attain  a similar regret against an adaptive adversary (\cite{cesa2006prediction}, Lemma 4.1).
One issue, though, is that \dpsoa uses random bits that are shared through time. Hence for the reduction to work we need to reinitialize the algorithm at every time-step. In this case, though, the assumptions we make for using the sparse vector technique no longer hold. Thus we can run \dpsoa, using \hist (as we no longer obtain any guarantee from \svto), and we require that each output hypothesis will be $O(\epsilon/\sqrt{T},O(\delta/T))$-DP. The privacy of the whole mechanism now follows from $T$-fold composition:

\begin{lemma}\label{lem:composition}(see for example \citet{dr14})
Suppose $(\epsilon',\delta')$ satisfy:
\begin{equation}\label{eq:composition}\delta'=\delta/2T,\quad \textrm{and}  \quad \epsilon'=\frac{\epsilon}{2\sqrt{2T\ln(1/\delta)}}.\end{equation}
Then, the class of $(\epsilon',\delta')$-differentially private mechanisms satisfies $(\epsilon,\delta)$-differentialy privacy under $T$-fold adaptive composition.
\end{lemma}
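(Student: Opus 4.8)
The statement is the standard \emph{advanced composition theorem} for approximate differential privacy (essentially Theorem~3.20 of \citet{dr14}), specialized to the adaptive-composition game set up in \cref{sec:prelim}. Accordingly, the plan is to reproduce the privacy-loss/martingale argument and verify that the stated values of $\epsilon',\delta'$ make the constants work out. I would organize it in three stages. \emph{Stage 1 (reduce to pure DP).} A standard coupling fact states that every $(\epsilon',\delta')$-DP mechanism $M$ can be coupled with an $(\epsilon',0)$-DP mechanism that agrees with it outside an event of probability at most $\delta'$ on each side; applying this to each of the $T$ mechanisms in the composition and taking a union bound, these failure events contribute at most $T\delta'=\delta/2$ to the final slack, so it suffices to establish $(\epsilon,\delta/2)$-indistinguishability of the two output sequences under the assumption that every $M_t$ is $(\epsilon',0)$-DP.

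\emph{Stage 2 (privacy-loss random variable).} Fix an adversary and let $z_1,\dots,z_T$ be the realized transcript of outputs, with $z_{<t}=(z_1,\dots,z_{t-1})$. Define the per-round privacy loss
\[
Z_t \;=\; \ln \frac{ \P[\, M_t(S_t^0)=z_t \mid z_{<t} \,] }{ \P[\, M_t(S_t^1)=z_t \mid z_{<t} \,] },
\qquad
Z=\sum_{t=1}^T Z_t ,
\]
where $M_t$ and the neighboring pair $S_t^0,S_t^1$ are $z_{<t}$-measurable (chosen adaptively). Since for each fixed value of $z_{<t}$ the mechanism $M_t$ is $(\epsilon',0)$-DP on the pair $S_t^0,S_t^1$, we get $|Z_t|\le\epsilon'$ almost surely, and the KL bound for nearly-balanced likelihood ratios gives $\E[\,Z_t\mid z_{<t}\,]\le\epsilon'(e^{\epsilon'}-1)\le 2(\epsilon')^2$, using $\epsilon'\le 1$ (which holds once, say, $\delta\le 1/e$). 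Hence, along the $b=0$ law, $(Z_t-\E[Z_t\mid z_{<t}])_{t\le T}$ is a martingale difference sequence whose increments lie in an interval of length $2\epsilon'$, and $\E[Z]\le 2T(\epsilon')^2$.

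\emph{Stage 3 (concentration and conversion).} Azuma--Hoeffding for martingales with bounded increments gives
\[
\P\left[\, Z \ge 2T(\epsilon')^2 + \lambda \,\right] \;\le\; \exp\!\left(-\frac{\lambda^2}{2T(\epsilon')^2}\right) ,
\]
and I would take $\lambda=\epsilon'\sqrt{2T\ln(2/\delta)}$ so the right-hand side is at most $\delta/2$. Plugging in $\epsilon'=\epsilon/(2\sqrt{2T\ln(1/\delta)})$: the drift term is $2T(\epsilon')^2=\epsilon^2/(4\ln(1/\delta))\le\epsilon/4$ (since $\epsilon\le1\le\ln(1/\delta)$), and the fluctuation term is $\lambda=(\epsilon/2)\sqrt{\ln(2/\delta)/\ln(1/\delta)}\le(\epsilon/2)\sqrt{1+\ln 2}$, so their sum is below $\epsilon$ with a constant factor to spare; hence $\P[Z\ge\epsilon]\le\delta/2$. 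Finally, the elementary conversion from a privacy-loss tail bound to indistinguishability --- for every event $\ME$, $\P[(z^0_1,\dots,z^0_T)\in\ME]\le e^{\epsilon}\,\P[(z^1_1,\dots,z^1_T)\in\ME]+\P[Z>\epsilon]$ --- yields $(\epsilon,\delta/2)$-DP for the pure-case composition, and combining with the $\delta/2$ lost in Stage~1 gives the claimed $(\epsilon,\delta)$-DP under $T$-fold adaptive composition.

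\emph{Main obstacle.} Nothing here is genuinely hard, but the point requiring care is the adaptive setting: because $M_t$, $S_t^0$ and $S_t^1$ depend on $z_{<t}$, one must fix the filtration generated by $(z_1,\dots,z_t)$ and check that $(\epsilon',0)$-DP of $M_t$ still bounds the \emph{conditional} loss $Z_t$ given $z_{<t}$ (it does, since conditioning on $z_{<t}$ fixes the query and the neighboring pair), and that the centered sequence really is a martingale difference sequence under the $b=0$ law. The remaining ingredients --- the coupling reduction to pure DP, the bound $\epsilon'(e^{\epsilon'}-1)\le 2(\epsilon')^2$, and the arithmetic checking the constants --- are routine. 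Since the lemma is quoted from \citet{dr14}, a proof that simply cites their advanced composition theorem and observes that the adaptive-composition definition of \cref{sec:prelim} is exactly the game analyzed there would also be acceptable.
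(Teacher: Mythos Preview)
The paper does not supply its own proof of this lemma: it is simply quoted from the literature with the parenthetical ``see for example \citet{dr14}'' and then invoked as a black box in the proof of \cref{thm:main:adaptive}. So there is nothing to compare against beyond noting that your final sentence---``a proof that simply cites their advanced composition theorem and observes that the adaptive-composition definition of \cref{sec:prelim} is exactly the game analyzed there would also be acceptable''---is precisely what the paper does.

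Your three-stage argument is the standard proof of advanced composition (coupling reduction to pure DP contributing $T\delta'=\delta/2$, privacy-loss martingale with drift $\le 2T(\epsilon')^2$ and bounded increments, Azuma--Hoeffding plus the tail-to-indistinguishability conversion), and the arithmetic you give for the stated values of $\epsilon',\delta'$ is correct under the mild assumptions $\epsilon\le 1$ and $\delta\le 1/e$. The one place to be slightly careful is that the adaptive-composition game in this paper fixes the neighboring pair $(S_t^0,S_t^1)$ as a function of the transcript $z_{<t}$ rather than allowing arbitrary neighboring datasets at each step, but this is a \emph{restriction} of the adversary relative to the game in \citet{dr14}, so the result carries over a fortiori; you already flag this in your ``main obstacle'' paragraph.
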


Unfortunately though, the above strategy leads to a $\sqrt{T}$ factor in the regret.


\section{Proofs}

\subsection{Proof of \cref{thm:main:oblivious}}\label{prf:main:oblivious}

\myparagraph{Privacy:} We begin by proving the privacy guarantees:
\begin{lemma}\label{lem:privacy}
Suppose we run \cref{alg:stablesoa} with parameters $(\epsilon,\delta)$. Then the output sequence $h_1,\ldots, h_t$ is $(\epsilon,\delta)$-DP.

 \end{lemma}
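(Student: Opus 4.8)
The plan is to reduce the privacy of \dpsoa to the privacy guarantee of the inner procedure \svto (Lemma \ref{lem:histsparse}), combined with the random-partition structure of the forest. Fix two input sequences $S, S'$ differing in a single example, say at time $t_0$, i.e. $(x_{t_0},y_{t_0}) \neq (x_{t_0}',y_{t_0}')$ and all other examples agree. The key structural observation is that the random mapping $\pi : [T] \to \mathrm{Leaves}(V)$ assigns the example at time $t_0$ to a single leaf, and hence the differing example is routed into exactly one of the $k_2$ trees $G_i$; condition on the event $\pi(t_0) \in G_{i^\star}$. Since $\pi$ is chosen independently of the data, this conditioning is legitimate and we may argue privacy tree-by-tree and then invoke the independence of the random bits used across trees. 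All trees $G_i$ with $i \neq i^\star$ receive literally the same subsequence of examples under $S$ and under $S'$ (their internal evolution, and the sequence of samples $S_v$ they hold, is a deterministic-plus-independent-randomness function of that common subsequence), so their contribution to the output is identically distributed. The only tree whose internal state can differ is $G_{i^\star}$, and there the two subsequences it receives differ in at most one example.

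Next I would make precise the claim that the list $L_t = \{A(S_{\s{v_t}{i}})\}_{i=1}^{k_2}$ fed to \svto at each round is $1$-sensitive as a function of $S$. For $i \neq i^\star$ the entry $A(S_{\s{v_t}{i}})$ is unchanged between $S$ and $S'$; for $i = i^\star$, changing one example in the subsequence of $G_{i^\star}$ changes the single sample held at one pertinent vertex and hence changes at most the single coordinate $A(S_{\s{v_t}{i^\star}})$ of $L_t$. Thus $L_1(S),\ldots,L_T(S)$ is exactly a stream of $1$-sensitive lists of the type Lemma \ref{lem:histsparse} (equivalently, the privacy half of that lemma, i.e. the \svto guarantee) takes as input. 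Here I should double-check one subtlety: the \emph{sequence of leaves} $\pi$ and the vertex-selection randomness inside \dpsoa are shared across all rounds, but they are data-independent, so conditioning on them we get a fixed sequence of $1$-sensitive lists and a post-processing of \svto's output — privacy is preserved under post-processing and under conditioning on independent randomness (integrate over it at the end). The labels $y_{\bar v}$ guessed inside the while-loop are also data-independent coin flips, so they pose no issue.

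Putting it together: conditioned on all the data-independent randomness of \dpsoa, the output $(h_1,\ldots,h_T)$ is a (randomized) post-processing of the output of $\svto_{\epsilon,\delta,\eta,c}$ run on a $1$-sensitive stream of lists, so by the privacy guarantee of Lemma \ref{lem:histsparse} it is $(\epsilon,\delta)$-DP; averaging over the data-independent randomness (by convexity of the DP constraint, or simply because the bound holds pointwise for each fixing) preserves $(\epsilon,\delta)$-DP. I expect the main obstacle to be the bookkeeping in the $1$-sensitivity claim: one must verify that a single changed example really does propagate to at most one coordinate of $L_t$ for \emph{every} $t$ simultaneously — in particular that the while-loop dynamics in the affected tree $G_{i^\star}$ (which vertices become pertinent, which samples get which guessed labels) cannot cause the discrepancy to ``spread'' to sibling subtrees in a way that changes two coordinates at once. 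The resolution is that within $G_{i^\star}$ there is always at most one pertinent vertex whose held sample depends on the differing example (the discrepancy moves up a single root-to-leaf path as vertices merge), so $A(S_{\s{v_t}{i^\star}})$ is the only coordinate that can differ; the other trees are untouched.
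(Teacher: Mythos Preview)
Your approach is correct and matches the paper's proof: fix all data-independent randomness ($\pi$, the guessed labels $y_{\bar v}$, and the random vertex choices), observe that the resulting lists $L_t$ are deterministic $1$-sensitive functions of $S$, and invoke the privacy guarantee of \svto from \cref{lem:histsparse}.

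One small correction on the bookkeeping at the end. Your worry about the discrepancy ``spreading to sibling subtrees'' is misplaced, and your proposed resolution --- that within $G_{i^\star}$ at most one pertinent vertex ever holds a sample depending on the changed example --- is actually false. Once the state of $G_{i^\star}$ diverges at time $t_0$ (e.g.\ the while-loop runs under $S$ but not under $S'$), later examples routed to $G_{i^\star}$ can be assigned to different pertinent vertices under $S$ versus $S'$, so many vertices may differ. Fortunately none of this matters: the list $L_t$ has exactly one coordinate \emph{per tree} (namely $A(S_{\s{v_t}{i}})$ for each $i\in[k_2]$), so no matter how thoroughly the internal state of $G_{i^\star}$ diverges, only the $i^\star$-th coordinate of $L_t$ can change. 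That is already the whole $1$-sensitivity argument; the single-root-to-leaf-path claim is both unnecessary and incorrect.
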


 \begin{proof} Note that at every time step $t$, changing a single element $x_t$ changes at most one element on the list $L_t=\{A(S_{\s{v_t}{i}})\}_{i=1}^{k_2}$ -- specifically, the tree $i$ for which $\pi(t)$ assigns the element $x_t$. Next, note that if we fix the random bits of the algorithm, except for those that are used in the sub-procedure \svto (i.e. $\pi$ and the random guessing $y_v$), then each list is completely determined at step $t$ by the dataset $S$. 
   Indeed, each $S_{\s{v_t}{i}}$ is independent of $h_1,\ldots, h_T$ and the updates of the algorithm are independent of those. As such, we can think of the lists as functions of the dataset $S$.

The prerequisite assumptions for \cref{alg:svtorhist} hold then (see \cref{lem:histsparse}), and by \cref{lem:histsparse}, we have that the list $h_1,\ldots, h_T$ is then $(\epsilon,\delta)$-DP.
\end{proof}
\myparagraph{Utility:}
The core lemma behind our proof is a statement that there exists (at each iteration) a function that is frequently outputted by a fraction of the trees;
the proof is deferred to \cref{prf:Gstability}.

\begin{lemma}\label{lem:Gstability}
Suppose $(x_1,y_1),\ldots, (x_T,y_T)$ is consistent with some hypothesis $h^\star\in \H$. If \begin{equation}\label{eq:Gstability} k_1\ge \max\{2^{d+1},20\}, \quad\mathrm{and}\quad k_2 \ge 2^{8k_1+6}k_1^2\log \frac{5T\log k_1}{\beta} := \constfreq,\end{equation}
then with probability at least $1-\beta$, for all iterations $t\le T$ there exists a predictor $f\ne \perp$ such that: \[\freq_{L_t}(f)\ge \consteta.\]
\end{lemma}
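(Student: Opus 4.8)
The plan is to carry out the "global stability" argument sketched in the technical overview, quantitatively. Fix an iteration $t$; I will show the desired conclusion holds at iteration $t$ with probability at least $1 - \beta/T$ and then union bound over $t \le T$ (this is why $\constfreq$ carries a $\log(T\log k_1/\beta)$ factor). For a fixed tree $G_i$ in the forest, consider the random state of its pertinent vertices $\mathcal{V}_t$ just before iteration $t$, where the randomness is over the assignment $\pi$ restricted to the examples routed into $G_i$ and the label guesses $y_{\bar v}$. The first observation is the "consistency event": with probability at least $2^{-2k_1}$ (over the label guesses alone, of which there are at most $2k_1 - 1$), every guessed label $y_{\bar v}$ equals $h^\star(x_{\bar v})$, and on this event every sample $S_v$ attached to a pertinent vertex is consistent with $h^\star$. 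A pertinent vertex at depth $k$ corresponds to a sample on which the mistake-bound algorithm $A$ has been forced to make $k$ mistakes; since $A$ has mistake bound $d$, no pertinent vertex can have depth $> d$, and a pertinent vertex at depth exactly $d$ whose sample is consistent with $h^\star$ must have $A(S_v) = h^\star$ (otherwise a $(d{+}1)$st mistake could be forced — here I use $k_1 \ge 2^{d+1}$ so the tree is deep enough to host depth-$d$ vertices). So on the consistency event, either some tree has a depth-$d$ pertinent vertex outputting exactly $h^\star$, or every tree's pertinent frontier lies at depths $< d$.

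Next I split into the two cases. \textbf{Case 1 (many deep trees):} if an $\Omega(2^{-2k_1})$ fraction of trees would, on their respective consistency events, host a depth-$d$ pertinent vertex, then — since the consistency events are independent across trees and each has probability $\ge 2^{-2k_1}$ — a Chernoff bound over the $k_2$ trees gives that an $\Omega(2^{-4k_1})$ fraction of trees actually output $h^\star$, with failure probability $\le \beta/(2T)$ once $k_2 \gtrsim 2^{8k_1} k_1^2 \log(T/\beta)$; take $f = h^\star$. \textbf{Case 2 (shallow frontiers):} otherwise, in an $\Omega(1)$ fraction of trees there is some level $k < d$ at which the pertinent frontier contains many sibling \emph{collisions} (pairs $v, \sib(v)$ with $A(S_v) = A(S_{\sib(v)})$) — this is forced because the frontier has size $\ge 2^{k_1 - d} \ge 2$ and sits at depths $< d$, so by pigeonhole over the at-most-$2k_1$ vertices there are $\gtrsim$ (frontier size) collisions at some common level $k$. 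Here I use the McDiarmid-without-replacement inequality, \cref{lem:mcreplacement}: conditioning on the multiset of hypotheses $\{A(S_v)\}$ output at level $k$, the act of re-randomizing the parent-assignment at level $k$ is equivalent to sampling these hypotheses without replacement, and the number-of-collisions function is $1$-sensitive, so the observed collision count concentrates around its mean; a standard birthday-type computation relates a mean collision count of $m$ among $N$ items to the existence of a hypothesis appearing $\gtrsim \sqrt{m}$ times, hence with frequency $\gtrsim 2^{-\Omega(k_1)}$ in the combined list $L_t$. Again a Chernoff/union bound over trees and over the at most $\log k_1$ possible levels $k$ gives the claimed failure probability $\le \beta/(2T)$, absorbing the extra $\log k_1$ into $\constfreq$.

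Finally, I combine: in both cases we produce a predictor $f \ne \perp$ with $\freq_{L_t}(f) \ge \consteta = 2^{-4k_1}/(4k_1)$ (the constants are set so that this lower bound simultaneously dominates the $2^{-4k_1}$ from Case 1 and the $2^{-\Omega(k_1)}/k_1$ from Case 2); union bounding the two failure events and then over all $t \le T$ gives total failure probability $\le \beta$, which is the statement.

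\textbf{Main obstacle.} The delicate step is Case 2 — converting "many collisions, but the examples are adversarial and hence not i.i.d." into a statement about frequencies. The key idea, already flagged in the overview, is that the leaf-assignment $\pi$ is exchangeable, so conditioning on the level-$k$ structure and re-permuting the parent map turns the collision count into a function of a without-replacement sample, to which \cref{lem:mcreplacement} applies; one then needs the elementary fact that among $N$ items sampled without replacement the expected number of colliding sibling-pairs being $\ge m$ forces some value to occur $\gtrsim \sqrt m$ times (equivalently, controlling the $\ell_2$ norm of the empirical distribution from below by a collision count), and to be careful that "sensitivity $\le 1$" and the conditioning are compatible. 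Everything else (the consistency-event probability, the depth-$\le d$ argument, the Chernoff bounds over trees) is routine.
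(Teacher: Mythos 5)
Your overall route is the paper's: fix $t$, establish a per-tree consistency event of probability $2^{-2k_1}$, locate agreeing pertinent sibling pairs (collisions), use exchangeability of $\pi$ to view the level-$k$ labels as a without-replacement sample and apply \cref{lem:mcreplacement} to the collision count, convert collisions into a frequent hypothesis, and union bound over $t$. However, there is a genuine gap at the end of your Case 2. What your argument delivers is a hypothesis $f$ that is frequent in the \emph{aggregate multiset of depth-$k$ vertices of consistent trees} (the paper's notion of $F_k$ being $f$-heavy). That is not yet a statement about $\freq_{L_t}(f)$: the list $L_t=\{A(S_{\s{v_t}{i}})\}_{i=1}^{k_2}$ has one entry per tree, given by a vertex $\s{v_t}{i}$ chosen uniformly among pairs with $v,\sib(v)\in\V_t$ and $A(S_v)=A(S_{\sib(v)})$. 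To pass from aggregate heaviness to $\freq_{L_t}(f)\ge\consteta$ you still need (i) that a $2^{-\Theta(k_1)}$ fraction of \emph{individual trees} contain a sibling pair with $A(S_v)=A(S_{\sib(v)})=f$ --- this costs roughly $(2^{-k_1})^2$ because \emph{both} siblings must receive the value $f$, and it requires a second application of \cref{lem:mcreplacement}, since under the conditional without-replacement resampling the trees are not independent and a plain Chernoff bound is unavailable --- and (ii) that each such tree selects an $f$-pair as $\s{v_t}{i}$ with probability at least $1/k_1$, followed by a final Chernoff bound. This is exactly the paper's \cref{cl:heavy} together with the concluding paragraph of its proof; your ``Chernoff/union bound over trees'' elides it, even though your final constant $2^{-4k_1}/(4k_1)$ implicitly presupposes it.

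Two smaller issues. First, your Case 1 does not stand on its own: a consistent tree with a pertinent depth-$d$ vertex $v$ satisfying $A(S_v)=h^\star$ contributes $h^\star$ to $L_t$ only if $\sib(v)$ is also pertinent with the same value and that pair is the one selected. The paper avoids the case split entirely with one unified claim --- every consistent tree must contain \emph{some} agreeing pertinent sibling pair, since otherwise the root would be pertinent and its sample would witness $\log k_1\ge d+1$ mistakes --- and then runs the collision machinery over all levels $k\le\log k_1$; you should do the same. Second, your ``birthday'' conversion (expected collision count $m$ yields a value occurring $\gtrsim\sqrt m$ times) is too lossy: after dividing by $|V_k|$ it gives an aggregate frequency of order $2^{-k_1/2}/\sqrt{|V_k|}$, which, once squared in step (i) above, falls well below $\consteta$. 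The bound you actually need is the $\ell_2$ one you mention in passing: the expected collision \emph{rate} is at most $\sum_f p_f^2\le\max_f p_f$, which is how the paper bounds $E(\Xi;k)$ and obtains aggregate frequency $2^{-k_1}$ rather than $2^{-\Theta(k_1)}/\sqrt{|V_k|}$.
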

We continue with the proof of \cref{thm:main:oblivious}, assuming \cref{lem:Gstability}. The proof is an immediate corollary of the following utility lemma.
\begin{lemma}\label{lem:utility}
Suppose \cref{alg:stablesoa} is run on a sequence $(x_1,y_1),\ldots, (x_T,y_T)$, and assume that there exists $h^\star\in \H$ such that $h^\star(x_i)=y_i$ for all $i \in [T]$. Then, for $\beta=1/T$, $\eta$ and $c$ as initialized in \cref{alg:stablesoa}, if: 
\[k_1\ge \max\{2^{d+1},20\}, \quad\mathrm{and}\quad k_2\ge
\max\{\consthistsparse,\constfreq\}=\tilde{O}\left(\frac{2^{8\cdot 2^d}}{\epsilon}\ln T/\delta\right).\]
\ignore{

\max\left\{2^{6k_1+4}k_1^2\log \frac{5T\log k_1}{\beta},
\frac{4}{\eta}+ \frac{8c(\ln 2Tc^3/\eta^2\beta \delta)}{(\eta/32)\epsilon}\right\},\] 
}
 the expected number of mistakes the algorithm makes after $T$ rounds is:
\[\E\left[\sum_{t=1}^T\mathbf{1}[h_t(x_t)\ne y_t]\right] \le  \frac{4k^3_1\cdot 2^{2k_1}k_2}{\eta}+1=\tilde{O}\left(\frac{2^{8\cdot 2^d}}{\epsilon}\ln T/\delta\right) .\]
\ignore{And with probability $1-\delta-\beta$
\[\sum_{t=1}^T \mathbf{1}[h_t(x_t)\ne y_t] \le \frac{4k^2_12^\cdot k_2}{\eta} + \sqrt{2T\ln 1/\delta} + \beta T.\]}
\end{lemma}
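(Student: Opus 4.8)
The plan is to combine the frequency guarantee of the sub-routine $\svto$ with a drift argument on the number of pertinent vertices $|\V_t|$. Because the input is consistent with some fixed $h^\star\in\cH$ and $k_2\ge\constfreq$, \cref{lem:Gstability} gives that with probability at least $1-\beta$ every list $L_t$ contains a predictor $f\ne\perp$ with $\freq_{L_t}(f)\ge\eta:=\consteta$; since moreover $k_2\ge\consthistsparse$, the hypotheses of \cref{lem:histsparse} are met, so with probability at least $1-2\beta$ the call to $\svto$ never aborts and, at each round, outputs $h_t\in L_t$ with $\freq_{L_t}(h_t)\ge\eta/16$, with in addition $\freq_{L_t}(h_{t-1})\le\eta/8$ and $\freq_{L_t}(h_t)\ge\eta/4$ whenever $h_t\ne h_{t-1}$. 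Call round $t$ \emph{good} if these frequency bounds hold at $t$ and $\svto$ has not aborted by $t$ --- a property of $L_1,\dots,L_t$ and $h_1,\dots,h_t$ only --- so that on an event of probability at least $1-3\beta$ all $T$ rounds are good.

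Set $\Phi_t:=|\V_t|$, so $\Phi_1=k_1k_2$, and observe from \cref{alg:stablesoa} that $\Phi_t$ is non-increasing: each pass of the while loop replaces two pertinent siblings by their common parent, and the reset step only collapses a tree to its root. Let $\mathcal F_t^-$ be the history through the prediction $h_t(x_t)$ of round $t$ (in particular, before the random leaf $\pi(t)$ of that round is drawn). The key claim is: if round $t$ is good and the algorithm errs at $t$, then $\Phi_{t+1}\le\Phi_t-1$ with $\mathcal F_t^-$-conditional probability at least $p:=\eta/(16k_1)$. Indeed, $\freq_{L_t}(h_t)\ge\eta/16$ means at least $\eta k_2/16$ trees $i$ have current vertex $\s{v_t}{i}$ satisfying $A(S_{\s{v_t}{i}})=h_t$; for each such $i$ this vertex is not the root (a current root carries $A(\perp)=\perp\ne h_t$), so by the rule used to select current vertices $\sib(\s{v_t}{i})\in\V_t$ and $A(S_{\s{v_t}{i}})=A(S_{\sib(\s{v_t}{i})})$. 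With probability at least $(\eta k_2/16)/(k_1k_2)=\eta/(16k_1)$ the uniform leaf $\pi(t)$ falls below one such $\s{v_t}{i}$, whence the vertex the algorithm selects is $v_1=\s{v_t}{i}$; appending $(x_t,y_t)$ to $S_{v_1}$ then changes $A(S_{v_1})$ (it equalled $h_t$, which is inconsistent with $(x_t,y_t)$ since $h_t(x_t)\ne y_t$, and $\soa$ always changes state after a mistake), so $A(S_{v_1})\ne A(S_{v_2})$ with $v_1,v_2\in\V_t$, the while loop executes at least once, and $\Phi$ decreases by at least $1$.

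Consequently, writing $M$ for the total number of mistakes made in good rounds, the process $Y_t:=\Phi_t+p\cdot(\text{good-round mistakes made before }t)$ is a supermartingale (on a good-round mistake the conditional drop of $\Phi$ is at least $p$ while the counted mistakes increase by $1$; otherwise $\Phi$ cannot increase and the count is unchanged), so $p\,\E[M]\le Y_1=\Phi_1=k_1k_2$, i.e.\ $\E[M]\le k_1k_2/p=16k_1^2k_2/\eta$. A round fails to be good only on an event of probability at most $3\beta$, on which at most $T$ mistakes occur, so $\E\big[\sum_{t=1}^T\mathbf{1}[h_t(x_t)\ne y_t]\big]\le 16k_1^2k_2/\eta+3\beta T$; with $\beta=1/T$ and using $16k_1^2\le4k_1^32^{2k_1}$ the right-hand side is at most $4k_1^32^{2k_1}k_2/\eta+O(1)$, and the stated additive $+1$ follows after running the sub-routines at failure level $\beta/3$. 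Substituting $\eta=\consteta$, $k_2=\max\{\consthistsparse,\constfreq\}$ and $k_1=\tilde{O}(2^{d+1})$ yields the claimed $\tilde{O}$-expression.

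The delicate point is keeping the drift bound free of circular conditioning: ``round $t$ is good'' depends only on data already produced, and $\pi(t)$ is a fresh uniform leaf independent of that history, so the conditional probability $p$ is valid without conditioning on the global success event --- which is exactly what legitimizes the supermartingale. A secondary point to verify is that $\svto$ does not abort within $T$ rounds, which follows by a charging argument: each counter increment forces the frequency of the current hypothesis to fall from $\ge\eta/4$ to $\le\eta/8$, hence a $\Theta(\eta)$-fraction of the $k_2$ list coordinates to flip, and every flip is produced by a distinct vertex merge, of which at most $k_1k_2$ occur over the whole run. Finally, the dummy symbol $\perp$ is a harmless edge case: on the good event it never attains the frequency required to be the output, so $h_t$ is a genuine hypothesis wherever it is used above.
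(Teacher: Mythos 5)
Your proof is correct and follows essentially the same route as the paper's: invoke \cref{lem:Gstability} and \cref{lem:histsparse} to get the good event, rule out aborting by charging counter increments to decreases of $|\V_t|$, and bound mistakes by showing each one shrinks $|\V_t|$ with probability at least $\eta/(16k_1)$, yielding $16k_1^2k_2/\eta$ plus the failure-event contribution. Your supermartingale formulation of the last step is a slightly cleaner packaging of the paper's informal per-round expectation bound (and your explicit handling of the conditioning on per-round goodness is, if anything, more careful), but it is not a different argument.
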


\paragraph{Proof of \cref{lem:utility}} 
First, setting $\beta=1/T$ we have by assumption that $k_2\ge \constfreq$.
As such, we can turn to \cref{lem:Gstability} and setting $\eta=\consteta$ we have that, with probability $1-1/T$, for each list $L_t$ there is an element $f$ such that $\freq_{L_t}(f)\ge \eta$. We can now apply \cref{lem:histsparse}, to obtain that, overall with probability $1-3/T$: either the algorithm halted, or for each $t$:
\begin{enumerate}
    \item\label{it:1} $\freq_{L_t}(h_t)\ge \eta/16$.
    \item\label{it:2} If $h_{t-1}\ne h_t$, then
    \[ \freq_{L_t}(h_{t-1})\le \eta/8 \quad \mathrm{and}\quad \freq_{L_t}(h_t)\ge \eta/4.\]
\end{enumerate}
Let us denote this event by $E_0$, and we will assume for now on the $E_0$ happened.

Next, we want to show (under $E_0$) that for $c=\constc$, we have that
\[|\{t: \freq_{L_t}(h_{t-1})\le \eta/8\}|\le c.\]
To see the above, let $t$ be a time-step for which $\freq_{L_t}(h_{t-1})\le \eta/8$, but the algorithm did not abort before time-step $t$. Set $t'<t$ be the last iteration where we called $\textrm{\hist}$ procedure (i.e. the last time we updated counter in \svto). Observe that $h_{t-1}=h_{t'}$, and note that by \cref{it:2} we have that $\freq_{L_{t'}}(h_{t'})>\eta/4$. In particular, the Hamming distance between the lists $L_t$ and $L_{t'}$ is at least $\eta\cdot k_2/4$. 

Note that for each $i \in [k_2]$, $\s{v_t}{i}$ is changed between rounds $t$ and $t+1$ only if we run the While loop in \cref{alg:stablesoa} at round $t$. Note also that at each iteration of the While loop, the size of the set $\V_t$ is decreased by $1$ (as we remove two siblings and add their parent). So  $|\V_{t'}|-|\V_t|\ge \eta\cdot k_2/4$. Let $c_t$ be the number of time steps $t' \leq t$ so that $\freq_{L_{t'}}(h_{t'-1})\le \eta/8$. At initialization we have that $|\V_1|=k_2\cdot k_1$; thus, for all $t \geq 1$,
\[ k_2\cdot k_1 -\eta/4\cdot k_2\cdot c_t \ge 0 \Rightarrow c_t\le 4k_1/\eta.\] 

By the choice of $c = 4k_1/\eta$ in \cref{alg:stablesoa}, the algorithm doesn't halt and we have that, under $E_0$,
\begin{equation}\forall t=1,\ldots, T: \label{eq:htgood}\freq_{L_t}(h_t)> \frac{\eta}{16}.\end{equation}

 We next continue to bound the expected number of mistakes conditioned on $E_0$.

Suppose that $\pi(t)$ belongs to the $i$-th tree. Note that $\pi(t)$ is independent of $h_t$ as well as $\V_t$. We have, then, that with probability $1/k_1$, $\pi(t)$ is a descendent of $\s{v_t}{i}$. One can observe, that for every leaf there exists a unique predecessor that belongs to $\V_t$. Overall then, we obtain that with probability $1/k_1$, $\s{v_t}{i}=v_1$. (Recall that $v_1$ is defined in \cref{alg:stablesoa} to be the unique antecedent of $\pi(t)$ that is in $\V_t$.)

Also, because $\freq_{L_t}(h_t)> \eta/16$, with probability $\eta/{16}$ we have $A(S_{\s{v_t}{i}})=h_t$. 
Taken together we have that whenever the algorithm makes a mistake then $A(S_{v_1})$ makes a mistake with probability at least $\eta/(16k_1)$. Therefore

\[\E\left(\mathbf{1}[h_t(x_t)\ne y_t]\mid ~E_0\right)\le \frac{16k_1}{\eta} \E\left(\mathbf{1}[A(S_{v_1})[x_t]\ne y_t\mid~ E_0\right]).\] 

Again, notice that if $A(S_{v_1})$ makes a mistake, 
 we have that $|\V_t|$ is reduced by at least $1$.
(Indeed, in this case we have that both $v_1,v_2\in \V$ by choice of $\s{v_t}{i}$; because we make a mistake, after adding $(x_t,y_t)$ to the sequence $S_{v_1}$, the algorithm disagrees on these two sequences, hence we run at least one iteration of the While loop that reduces the size of $\V_t$ by at least $1$.) 

As before, since at the beginning $|\V_1|=k_2\cdot k_1$:
\begin{align*} \E\left[\sum_{t=1}^T \mathbf{1}[h_t(x_t)\ne y_t]\mid ~ E_0\right]
&\le 
\frac{16k_1}{\eta} \sum_{t=1}^T\E\left(\mathbf{1}\left[A(S_{v_1})[x_t]\ne y_t\mid ~ E_0\right]\right)\\
&=
\frac{16k_1}{\eta}\E\left(\sum_{t=1}^T\mathbf{1}\left[A(S_{v_1})[x_t]\ne y_t\right]\mid~ E_0\right) \\
&\le
\frac{16k_1|\V_1|}{\eta}\\
&= \frac{16k_1^2k_2}{\eta}.
\end{align*}
Hence, we obtain in expectation
\[\E\left[\sum_{t=1}^T \mathbf{1}[h_t(x_t)\ne y_t]\right]\le \frac{k_1^2k_2}{\eta} + \beta T\le \frac{k_1^2k_2}{\eta}+3.\]
\ignore{
Next, to obtain high probability rate, let \[X_t=\sum_{t'=1}^t \mathbf{1}[h_t(x_t\ne y_t)]-\sum_{t'=1}^t\E\left[\mathbf{1}[h_t(x_t\ne y_t)]\right],\] and set $Y_t=\{h_1,\ldots, h_{t}\}$, and note that $X_t$ is a martingale sequence with respect to $Y_t$ 
Hence by standard concentration inequality we have that
\[ \P\left(X_T\ge \frac{k_1^2k_2}{\eta} +\sqrt{\frac{2\log 1/\delta}{T}}\mid~ E_0\right) \le \delta.
}

\subsection{Proof of \cref{thm:main:adaptive}}
We consider the following procedure:
\begin{itemize}
    \item Given $\epsilon,\delta,T$, set $\epsilon',\delta'$ as in \cref{eq:composition}.
    \item At each time-step $t$, run \dpsoa with privacy parameters $(\epsilon',\delta')$, $k_1,k_2$ on the input sequence $S_t = ((x_1, y_1), \ldots, (x_{t-1}, y_{t-1}))$. 
    \item Receive a sequence $h^{(t)}_1,\ldots, h^{(t)}_t$ from \dpsoa and output $h_t=h^{(t)}_t$.
\end{itemize}
Now, we assume $k_1$ and $k_2$ are chosen so that for an oblivious sequence the conditions of \cref{thm:main:oblivious} are met, and hence
\begin{itemize}
    \item Each output $h^{(t)}_1,\ldots, h^{(t)}_t$ is $(\epsilon',\delta')$-DP w.r.t to the input sequence $S_t=((x_1,y_1),\ldots, (x_{t-1},y_{t-1}))$.
    \item For any oblivious sequence of length $T$, we have that the mistake bound is bounded by $O\left(2^{8\cdot 2^d}/\epsilon'\ln T/\delta'\right)$.
\end{itemize}
Now, for privacy we can use \cref{lem:composition}. Consider the setting of privacy against an adaptive adversary as introduced in \cref{sec:opl-setup}. Observe that, by our definition of the adaptive adversary, each time we apply \dpsoa, we apply it on either the sample $S_t^0= (x^0_1,y^0_1),\ldots, (x^0_t,y^0_t)$, or $S_t^1= (x^1_1,y^1_1),\ldots, (x^1_t,y^1_t)$, which can differ by at most one sample. Therefore, since the mechanism that outputs $h_t^{(t)}$ at step $t$ is $(\epsilon',\delta')$-DP, we obtain via \cref{lem:composition} that the above adaptive online classification algorithm is $(\epsilon,\delta)$-DP.

As for utility, the result follows immediately for the standard reduction from an oblivious online learner to an adaptive one (Lemma 4.1 in \cite{cesa2006prediction}). Indeed, note that at step $t$ we predict $h_t$ according to a distribution $p_t$ which is completely defined by the previous sequence of examples $(x_1, y_1), \ldots, (x_{t-1}, y_{t-1})$ (
it is the distribution from which the oblivious algorithm \dpsoa chooses its prediction). Thus the precondition of \cite[Lemma 4.1]{cesa2006prediction} is verified, and we obtain the regret bound:
\[\sum_{t=1}^T \E[\mathbf{1}[h_t(x)\ne y]] \leq O\left(2^{8\cdot 2^d}/\epsilon'\ln T/\delta'\right). \]

\subsection{Proof of \cref{lem:Gstability}}\label{prf:Gstability}
Let $h^\star$ be a fixed hypothesis that is consistent with the dataset $(x_1, y_1), \ldots, (x_T, y_T)$. We will call a tree $T$ in the forest $G$ consistent if for every vertex $v$, $S_v$ is consistent with hypothesis $h^\star$ and we let $\Tc$ be the sub-graph that consists only of consistent trees. With these notations in mind, we now proceed to the proof. We will divide the proof into two claims; the first one, \cref{cl:consistent}, gives a lower bound on the number of consistent trees.
\begin{claim}\label{cl:consistent}
For a fixed time-step $t\le T$, with probability at least, $1-e^{-\frac{1}{2}k_2\cdot 2^{-4\cdot k_1}}$, we have that $2^{-2k_1-1}\cdot k_2$ of the trees in $G$ are consistent.
\end{claim}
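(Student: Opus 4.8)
The plan is to show that each tree in the forest is consistent with probability at least $2^{-2k_1}$, and then conclude by a concentration argument that a $2^{-2k_1-1}$-fraction of the $k_2$ trees are consistent except with exponentially small probability. First I would fix the time-step $t$ and a single tree $G_i$ in the forest. As observed in the technical overview, a tree becomes inconsistent only when one of the guessed labels $y_{\bar v}$ (chosen when the parent $\bar v$ of a colliding pair is made pertinent) disagrees with $h^\star(x_{\bar v})$; every appended example $(x_s, y_s)$ is consistent with $h^\star$ by hypothesis, so the only source of inconsistency is these random label guesses. The total number of vertices in a single tree is $2k_1 - 1 < 2k_1$, so the number of label guesses made over the course of the run is at most $2k_1 - 1$. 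Since each guess is an independent uniform bit, the probability that \emph{every} guess in tree $G_i$ equals $h^\star(x_{\bar v})$ — and hence that the tree is consistent at time $t$ — is at least $2^{-(2k_1-1)} \ge 2^{-2k_1}$. (One should double-check that these guessing events are independent across the vertices within a tree and across trees, which follows because each $y_{\bar v}$ uses fresh random bits; the assignment map $\pi$ and the \svto randomness are irrelevant to consistency.)

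Next I would let $X_i$ be the indicator that tree $G_i$ is consistent at time $t$. The $X_i$ are independent across $i \in [k_2]$ (the trees use disjoint random bits for their guesses, and each example is routed to exactly one tree by $\pi$, so conditioning on $\pi$ the trees evolve independently — and in fact the bound $\Pr[X_i = 1] \ge 2^{-2k_1}$ holds conditionally on $\pi$ as well). Writing $p := 2^{-2k_1}$, we have $\E[\sum_i X_i] \ge p k_2$, and a multiplicative Chernoff bound for the lower tail gives
\[
\Pr\!\left[\sum_{i=1}^{k_2} X_i < \tfrac{1}{2} p k_2\right] \le \exp\!\left(-\tfrac{1}{8} p k_2\right) = \exp\!\left(-\tfrac{1}{8} k_2 \cdot 2^{-2k_1}\right),
\]
which is at most $\exp(-\tfrac{1}{2} k_2 \cdot 2^{-4k_1})$ for $k_1 \ge 1$ (indeed $2^{-2k_1}/8 \ge 2^{-4k_1}/2$ whenever $2^{2k_1} \ge 4$, i.e. $k_1 \ge 1$). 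Since $\tfrac{1}{2} p k_2 = 2^{-2k_1-1} k_2$, this is exactly the claimed bound, so on the complementary event at least $2^{-2k_1-1} k_2$ trees are consistent.

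The main obstacle I anticipate is being careful about the probabilistic model: one must verify that "consistency of tree $G_i$" really is determined solely by the label-guess bits (so that examples, being $h^\star$-consistent, never cause a failure), and that these indicators are genuinely independent across trees so that a Chernoff bound applies. A secondary subtlety is the exact count of label guesses — it is bounded by the number of non-leaf vertices ever made pertinent, which is at most $k_1 - 1$ per tree in a binary tree with $k_1$ leaves, comfortably within the $2k_1$ budget used above; any slack here only helps. Everything else is a routine Chernoff estimate, and I would not belabor the constants beyond checking the stated inequality holds for the relevant range of $k_1$ (which, by \cref{eq:Gstability}, satisfies $k_1 \ge 20$).
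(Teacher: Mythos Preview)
Your proposal is correct and follows essentially the same approach as the paper: both observe that a tree stays consistent iff every guessed label $y_{\bar v}$ matches $h^\star(x_{\bar v})$, bound the number of guesses by the number of vertices ($<2k_1$) to get a per-tree consistency probability of at least $2^{-2k_1}$, note independence across trees, and apply a Chernoff-type bound. The only cosmetic difference is that the paper uses the additive (Hoeffding) form of Chernoff while you use the multiplicative form, and you are slightly more explicit about why the trees' consistency indicators are independent (conditioning on $\pi$), which the paper leaves implicit.
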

\begin{proof}
Note that for a tree to be consistent we only need that for every $y_{\bar v}$ that we guess while running the algorithm, we have that $y_{\bar v}=h^\star (x_{\bar v})$. If this happens, then all sequences $S_{\bar v}$ remain consistent in the tree. For each $\bar v$, this happens with probability $1/2$, independent on the sequence and the other labels $y_{\bar v}$. Hence each tree is consistent with probability at least $2^{-2\cdot k_1}$ (the number of vertices) and this is independent of the other trees. Thus, applying the Chernoff bound, we obtain that if $M_{t}$ is the number of conistent trees at time $t$, then:

\begin{equation}\label{eq:consistent}  \P\left(M_t\le (2^{-2k_1}-2^{-(2k_1+1)})\cdot k_2\right)\le e^{-2k_2\cdot 2^{-2(2\cdot k_1+1)}}
\end{equation}

\ignore{on a random variable $X_i=1$ iff tree $i$ is consistent we have that for every $t$:

\begin{equation}\label{eq:consistent} \P\left(\frac{1}{k_2}\sum X_i < 2^{-2\cdot k_1}-2^{-2\cdot k_1-1}\right) \le e^{-2k_2\cdot 2^{-2(2\cdot k_1+1)}}.\end{equation}}
\end{proof}
The next step is to prove that (with high probability) there exists a function $f$ that appears frequently in the list $\{A(S_v)\}$ of vertices that belong to consistent trees, which we do next.

First let us denote by $\Xi=(\pi,\{y_v\}_{v \in V})$ the random seed, or internal bits, of \dpsoa, not including the random bits of the mechanisms \svto. Note that, at each time-step, the sets $S_{v}$, and $\V_t$ are completely determined by $\Xi$ (and the oblivious sequence). In particular, the state of the forest is completely independent of the output hypotheses picked by \svto.

Let $\Tc(\Xi;t)$ denote the subgraph of consistent trees given $\Xi$ at time $t$ and let $F_k(\Xi;t)$ be the multiset that consists of all labeled subtrees (at time step $t$) of consistent trees whose root is a depth-$k$ vertex. We will often, with slight abuse of notation, associate a tree in $F_k(\Xi;t)$ to its $S_v$-labeled root $v$, which is a depth-$k$ vertex of some consistent tree; thus we will write, at times, ``for each $v$ in $F_k(\Xi;t)$''. (Also note that it may be the case that for some depth-$k$ vertices $v$, $S_v = \emptyset$; the subtrees rooted at such $v$ are still included in $F_k(\Xi;t)$). 
Also, let us say that the (multi)set $F_k$ is $f$-\emph{heavy} if, for at least  $2^{-k_1}|F_k|$ of the vertices $v$ in $F_k$ we have that $f=A(S_v)\neq \perp$.

Then we have the following claim:

 \begin{claim}\label{cl:F} For a fixed time-step $t\le T$, let $\F$ denote the event that for some $k\le \log k_1+1$ and $f$, $F_k$ is $f$-heavy. then,
\begin{equation}\label{eq:light}\P\left(\F\right)\ge 
1- 2\log k_1 \cdot e^{-\frac{2^{-4k_1-1}}{9}k_2}.\end{equation}

\end{claim}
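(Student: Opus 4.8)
I want to show that at a fixed time $t$, with high probability there is some level $k \le \log k_1 + 1$ and some hypothesis $f$ for which $F_k$ is $f$-heavy. The proof naturally splits by the structure of the forest at time $t$: the consistent trees at depth $d$ have already "stabilized" (a pertinent vertex at depth $d$ must output $h^\star$, since two distinct hypotheses consistent on a $d$-mistake sequence can be forced to a $(d+1)$th mistake, contradicting the mistake bound of $A$); whereas for the trees that have not yet reached depth $d$, I want to exploit collisions at some level $k < d$ and a concentration argument for the number of such collisions.

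**Step 1: condition on the seed $\Xi$ and on having many consistent trees.** By \cref{cl:consistent}, with probability $\ge 1 - e^{-\frac{1}{2}k_2 2^{-4k_1}}$ there are at least $2^{-2k_1-1} k_2$ consistent trees. Work inside this event, and fix $\Xi$. Now partition the consistent trees by the depth of their (unique) pertinent vertex in the relevant top part: either (case A) a constant fraction of consistent trees have a pertinent vertex at depth exactly $d$, or (case B) a constant fraction have all their pertinent vertices at depth $< d$, so that at some level $k = k(\text{tree}) \le d-1$ there is a "collision" — a sibling pair $v,\sib(v)$ with $A(S_v) = A(S_{\sib(v)})$. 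Since there are only $d \le \log k_1 + 1$ possible levels (using $k_1 \ge 2^{d+1}$), by pigeonhole in case B we can fix a single level $k$ at which a $1/d$-fraction of consistent trees exhibit a collision.

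**Step 2: handle case A.** If many consistent trees have a depth-$d$ pertinent vertex, each such vertex outputs $h^\star$ (by the SOA mistake-bound argument above), so $F_d$ contains many copies of $h^\star$ and is trivially $h^\star$-heavy, with the $2^{-k_1}$ threshold easily met since the fraction is constant and $2^{-k_1}$ is tiny. (More care: I should phrase "$F_k$ is $f$-heavy" relative to $|F_k|$, which counts all depth-$k$ subtrees including the $S_v=\emptyset$ ones; I need to check the fraction of $h^\star$-labelled depth-$d$ vertices among all depth-$d$ vertices is still $\ge 2^{-k_1}$, which follows since the total number of consistent trees is at most a $2^{2k_1}$ factor larger and the fraction in case A is constant.)

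**Step 3: handle case B — the main obstacle.** This is where \cref{lem:mcreplacement} enters. At the fixed level $k$, consider the depth-$k$ subtrees of consistent trees as a collection of labels $A(S_v)$; by the permutation-invariance of the leaf assignment $\pi$ (conditioned on the level-$k$ structure), re-pairing the depth-$k$ vertices into siblings is a sampling-without-replacement process. Let $F$ count the number of sibling collisions under such a random re-pairing; it is $1$-sensitive to swapping one vertex, so \cref{lem:mcreplacement} gives that $F$ is close to its mean, hence the actual number of collisions (which is $\ge$ a constant fraction of the trees in case B) forces the collision count to be at least, say, half its expectation. Then a Cauchy–Schwarz / second-moment argument relates $\sum_f \binom{n_f}{2}$ (the expected collision count, where $n_f$ is the number of depth-$k$ vertices labelled $f$) to $\max_f n_f$: many collisions force some $n_f$ to be a $2^{-O(k_1)}$-fraction of all depth-$k$ vertices, i.e. $F_k$ is $f$-heavy. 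The delicate points are getting the sampling-without-replacement setup exactly right (the universe $\Z$ is the set of depth-$k$ labels, $n$ is the number drawn, and I need $n$ large enough that the McDiarmid deviation $e^{-2\epsilon^2/(9nc^2)}$ is small — this is where the $2^{-O(k_1)}k_2$ count of consistent trees and the resulting $k_2$ lower bound $2^{8k_1+6}k_1^2 \log(\cdot)$ come from), and tracking all the constant fractions through the pigeonhole so the final heaviness threshold $2^{-k_1}$ is met.

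**Step 4: union bound and conclude.** Collecting Step 1's failure probability with the $\le 2\log k_1$ applications of \cref{lem:mcreplacement} (one per candidate level $k$, each contributing $e^{-\frac{2^{-4k_1-1}}{9}k_2}$), and noting every branch lands in the $f$-heavy event, gives $\P(\F) \ge 1 - 2\log k_1 \cdot e^{-\frac{2^{-4k_1-1}}{9}k_2}$ as claimed. I expect Step 3 — pinning down the without-replacement coupling and pushing the collision-count lower bound through to a heavy hypothesis with the right constants — to be the bulk of the work; Steps 1, 2, and 4 are essentially bookkeeping.
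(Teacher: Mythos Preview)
Your proposal uses the same core machinery as the paper --- the permutation-invariance of $\pi$ conditioned on the level-$k$ structure, \cref{lem:mcreplacement} applied to the collision count, and the inequality $\sum_f q_f^2 \le \max_f q_f$ to pass from many collisions to a heavy hypothesis --- but it differs in one structural choice that is worth pointing out.

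You split into Case~A (consistent trees with a pertinent depth-$d$ vertex, which must output $h^\star$) and Case~B (consistent trees with all pertinent vertices at depth $<d$, which must have a collision). The paper avoids this split entirely. Its observation is that in a consistent tree the \emph{root} (at depth $\log k_1 \ge d+1$) can never be pertinent, since $S_{\text{root}}$ would carry $\log k_1 > d$ mistakes on a realizable sequence. Combined with the invariant that, after the While loop, any two pertinent siblings agree, this forces \emph{every} consistent tree to contain a collision at some level $k \le \log k_1$. The pigeonhole and McDiarmid steps then run once, uniformly, over all levels --- no separate treatment of the ``$h^\star$ at depth $d$'' case is needed. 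Your Case~A is effectively absorbed: a depth-$d$ pertinent vertex either has a pertinent sibling (hence a collision with value $h^\star$, handled by the general argument) or the deepest pertinent vertex lies strictly below and supplies the collision there.

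Two smaller remarks. First, your parenthetical in Case~A (``total number of consistent trees is at most a $2^{2k_1}$ factor larger'') is garbled; what you actually need is that $|F_d|$ is at most $k_1/2^d$ times the number of consistent trees, so a constant fraction of trees contributing one $h^\star$-vertex each already gives fraction $\ge 2^{d-1}/k_1 \ge 2^{-k_1}$. Second, in Step~3 the sentence ``forces the collision count to be at least, say, half its expectation'' has the direction backwards --- McDiarmid is used to conclude the \emph{expectation} is large from the observed count being large, and then the algebra $E(\Xi;k) \le \max_f \P(A(S_{v_0})=f \mid F_k,\Tc)$ converts a large expected collision rate into heaviness. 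The paper makes this chain explicit.
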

\begin{proof}

The crucial observation is that, because the distribution of $\pi$ is invariant under permutation of the leaves, then given $F_k$ and $\Tc$, the distribution $\pi$ of the assignments of data points can be viewed as randomly sampling (without replacement) elements from $F_k$ and assigning to each subtree its appropriate depth-$k$ vertex as a root. 

Specifically, let us say that a vertex $v$ is \emph{active} if it belongs to a consistent tree. Now, let $V_k(\Xi;t)$ be the set of labeled depth-$k$ active vertices which are right-children of their parents. 
For each $v \in V_k(\Xi;t)$, denote by $X_v$ the random variable defined as follows: $X_v=1$ if $A(S_v)=A(S_{\sib(v)})$ and $v,v_{\sib(v)}\in \V_t$ at the end of the While loop at step $t$ of \cref{alg:stablesoa}, and $X_v = 0$ otherwise (recall that $t$ is fixed). And further, denote
\[ E(\Xi;k) =  \E\left[\frac{1}{|V_k|}\sum_{v\in V_k}X_v ~\mid ~F_k(\Xi;t),\Tc(\Xi;t)\right].\]

We claim the following bound holds for the time-step $t$:

\begin{equation}\label{eq:mcused} \Pr_{\Xi}\left(\max_k\left\{  E(\Xi;k) -\frac{1}{|V_k|}\sum_{v\in V_k} X_v\right\}> 2^{-k_1} \right)
\le \log k_1 e^{-\frac{2^{-2k_1}|V_k|}{18}}
.\end{equation}
To establish \cref{eq:mcused}, note that for a fixed $k\le \log k_1$  and a set $F_k$, by symmetry of the distribution of $\pi$, the joint distribution of all $X_v$ does not  change if we resample the labels $S_v$ for all vertices $v$ in $F_k$, from this set of all labels, without replacement. Note that changing a single element $S_v$ will change at most one random variable $X_v$, and as such we get that $\frac{1}{|V_k|}\sum X_v$ is $\frac{1}{|V_k|}$-sensitive. Since we randomly draw $2|V_k|$ elements, we can thus use \cref{lem:mcreplacement} to obtain that for a fixed $k$, $F_k$ and $\Tc$:
\begin{align*} &\Pr_{\Xi}\left(  E(\Xi;k) -\frac{1}{|V_k|}\sum_{v\in V_k} X_v> 2^{-k_1} ~\mid~ F_k(\Xi;t)=F_k, \Tc(\Xi;t)=\Tc\right)\\
=&
 \Pr_{\Xi}\left(  \E\left[\frac{1}{|V_k|}\sum_{v\in V_k}X_v ~\mid ~F_k,\Tc\right] -\frac{1}{|V_k|}\sum_{v\in V_k} X_v> 2^{-k_1} ~\mid~ F_k, \Tc\right)\\
\le&  e^{-\frac{2^{-2k_1}|V_k|}{18}}
.\end{align*}

\cref{eq:mcused} now follows by taking expectation over $F_k, \Tc$ as well as a union bound over the $\log k_1$ possible values of $k \leq \log k_1$.

We next observe that for any consistent tree there exists a vertex $v$, such that $v,\sib(v)\in \V_t$ and $A(S_v)=A(S_{\sib(v)})$. Indeed, if this is not the case, then one can prove by induction that the tree's root $v_r$ is in $\V$. However, the sequence $S_{v_r}$ makes $\log k_1\ge d+1$ mistakes, which is a contradiction to the consistency of the tree. Then, what we showed so far is that in any consistent tree there exists $v$ such that $X_v=1$. Thus, applying pigeon-hole principle, we obtain that for any $\pi$ there exists a $k\le \log k_1$ such that
\[\frac{1}{|V_k|} \sum_{v\in V_k} X_v \ge \frac{1}{k_1\log k_1} \ge 2^{-k_1+1}.\]
Together with \cref{eq:mcused} we get that, given $\Tc$, with probability at least $1-\log k_1 \cdot e^{-\frac{2^{-2k_1+1}|V_k|}{18}}$, for some $k$ we have that
\[ E(\Xi;k) > 2^{-k_1}.\]
Finally, (where for ease of notation we neglect the dependence of $F_k,\Tc$ in $\Xi$) we have 
\begin{align*} 
E(\Xi;k)
&=\frac{1}{|V_k|}\sum_{v\in V_k}\P\left(A(S_v)=A(S_{\sib(v)})\neq \perp|F_k,  \Tc\right)\\
&= 
\frac{1}{|V_k|}\sum_{v\in V_k}\sum_{f \neq \perp} \P(A(S_{v})=f|F_k, \Tc) \P\left(A(S_{\sib(v)})=f|A(S_v)=f,F_k, \Tc\right)\\
&\le 
\frac{1}{|V_k|}\sum_{v\in V_k}\sum_{f\neq \perp} \P(A(S_{v})=f|F_k, \Tc) \P\left(A(S_{\sib(v)})=f|F_k, \Tc\right)\\ 
&=
\frac{1}{|V_k|}\sum_{v\in V_k}\sum_{f\neq \perp} \left(\P(A(S_{v})=f|F_k, \Tc)\right)^2\\    
&\le 
\frac{1}{|V_k|}\sum_{v\in V_k}\max_{f\neq \perp}\P(A(S_{v})=f|F_k,\Tc)\\
&= \max_{f\neq \perp}\P(A(S_{v_0})=f|F_k,\Tc),
\end{align*}
where the first inequality follows from the fact that $S_v$ are sampled without replacement, hence the distribution for $A(S_{\sib(v)})=f$ given that we already sampled such an element reduces. The last equality follows from the fact that the distribution of $S_v$, conditioned on $F_k, \Tc$, is identical for all $v \in V_k$; in the last line we set $v_0$ to be an arbitrary vertex in $V_k$. 

Finally, using \cref{cl:consistent}, and noting that $|V_k|$ is at least the number of consistent trees, we have that with probability \[1-\log k_1 \cdot e^{-\frac{2^{-2k_1+1}|V_k|}{18}}-e^{-k_2 \cdot 2^{-4\cdot k_1 -1}}\ge 1-2\log k_1 \cdot e^{-\frac{2^{-4k_1-1}k_2}{9}},\]
for some $k$, we have
\[\max_{f\neq \perp} \P(A(S_{v_0})=f|F_k,\Tc)\ge 2^{-k_1},\]
where again $v_0$ is an arbitrary vertex in $V_k$. Since $S_{v_0}$ is sampled uniformly at random from the set of $S_v$ for $v \in F_k(\Xi;t)$, the left-hand side of the above inequality is simply the fraction of $S_v$, for $v \in F_k(\Xi;t)$ for which $A(S_v) = f$. 
In particular, we obtain that $F_k(\Xi;t)$ is heavy.
\end{proof}
The final claim we will need bounds the number of times we have $A(S_v)=A(S_{\sib(v)})=f$ given the $F_k$ is heavy:

\begin{claim}\label{cl:heavy} For a fixed time-step $t\le T$, recall that $\F$ is the event that $F_k$ is $f$-heavy for some $f$ and $k$. Let $E$ be the event that for at least $2^{-2k_1-1}k_2$ of the trees, there exists a vertex $v$ such that $A(S_v)=A(S_{\sib(v)})=f$, then if $k_1\ge 20$:
\begin{equation}\label{eq:heavy}
\P(E|\F) \ge  1-2e^{-\frac{2^{-8k_1-6}}{9}k_2}.
\end{equation}
\end{claim}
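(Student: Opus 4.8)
The plan is to condition on the event $\F$ that some $F_k$ is $f$-heavy, and then argue that, among the (many) depth-$k$ vertices $v$ in consistent trees with $A(S_v) = f$, a constant fraction have their sibling also carrying the value $f$ (and hence their tree contains a colliding pair on $f$). The key structural fact, already established in the proof of \cref{cl:F}, is that conditioned on $F_k$ and $\Tc$ the labels $S_v$ attached to the depth-$k$ vertices are a uniformly random assignment without replacement of the multiset of labels in $F_k$. So if $F_k$ is $f$-heavy, meaning at least $2^{-k_1}|F_k|$ of these labels equal $f$, then for each sibling pair $(v, \sib(v))$ the probability that both get label $f$ is roughly $(2^{-k_1})^2 = 2^{-2k_1}$, up to the usual lower-order correction for sampling without replacement (which is negligible since $|F_k|$ is large). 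Summing over the $\ge 2^{-2k_1-1} k_2$ sibling pairs available (one per consistent tree, using \cref{cl:consistent}), the expected number of trees with an $f$-collision is $\gtrsim 2^{-4k_1} k_2 \cdot (\text{const})$.

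First I would make precise the "without replacement" description: fix $\Xi$ only through $F_k(\Xi;t)$ and $\Tc(\Xi;t)$, and view the remaining randomness as a uniformly random bijection between the depth-$k$ subtrees and the depth-$k$ slots of the consistent trees. For each right-child vertex $v$ (as in the proof of \cref{cl:F}, there are $|V_k| \ge 2^{-2k_1-1}k_2$ of them), let $Y_v = \mathbf 1[A(S_v) = A(S_{\sib(v)}) = f]$. Using $f$-heaviness, $\P(Y_v = 1 \mid F_k, \Tc) \ge 2^{-k_1}\cdot \frac{2^{-k_1}|F_k| - 1}{|F_k| - 1} \ge 2^{-2k_1}/2$ say, once $|F_k|$ is large enough (which holds since $|F_k| \ge |V_k| \gtrsim k_2$ and $k_1$ is not too small — this is where $k_1 \ge 20$ or the lower bound on $k_2$ enters). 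So $\E[\sum_v Y_v \mid F_k, \Tc] \ge |V_k| \cdot 2^{-2k_1-1} \ge 2^{-4k_1-2} k_2$.

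Then I would apply a concentration inequality for sampling without replacement — again \cref{lem:mcreplacement} — to $F(\bar z) = \frac{1}{|V_k|}\sum_v Y_v$, which is $\frac{1}{|V_k|}$-sensitive to changing one label. This gives, for a fixed $k$,
\[
\P\!\left(\sum_{v \in V_k} Y_v < \E\Big[\sum_{v\in V_k} Y_v \,\Big|\, F_k, \Tc\Big] - \tfrac{1}{2}\E\Big[\sum_{v\in V_k} Y_v \,\Big|\, F_k, \Tc\Big] \;\Big|\; F_k, \Tc\right) \le e^{-\frac{c \, 2^{-8k_1} |V_k|}{9}}
\]
for an appropriate constant, and choosing the deviation parameter so that the surviving count is at least $2^{-4k_1-3}k_2 \ge$ (after bookkeeping with the slack from \cref{cl:consistent}) $2^{-2k_1-1}k_2$ is not quite right dimensionally — rather, the statement of the claim asks for $2^{-2k_1-1}k_2$ trees, so I would instead track the bound so that the number of trees with an $f$-collision is at least the claimed $2^{-2k_1-1}k_2$; on reflection the right target is that a $2^{-4k_1}$-ish fraction survive, and I'd match constants by setting the McDiarmid deviation to half the mean. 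Taking expectation over $F_k, \Tc$ and a union bound over $k \le \log k_1 + 1$ absorbs into the stated $2e^{-\frac{2^{-8k_1-6}}{9}k_2}$; the exponent $8k_1 + 6$ is exactly what comes out of $2^{-2k_1}$ (collision probability) squared-ish behavior combined with the $|V_k| \gtrsim 2^{-2k_1-1}k_2$ lower bound and the factor $\tfrac12$ in the deviation, which I would verify by direct substitution.

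The main obstacle I expect is bookkeeping the constants so the exponent lands precisely at $2^{-8k_1-6}k_2/9$ and the surviving fraction lands at $2^{-2k_1-1}k_2$: one has to be careful that $|V_k|$ itself is only lower-bounded by (half) the number of consistent trees, that $f$-heaviness is a $2^{-k_1}$ fraction (not $\Theta(1)$), and that the sampling-without-replacement correction $\frac{2^{-k_1}|F_k| - 1}{|F_k| - 1}$ is harmless — this last point is exactly why the hypothesis $k_1 \ge 20$ (equivalently, $|F_k|$ large relative to $2^{k_1}$) is needed. The conceptual content is entirely in the "uniform random bijection" reinterpretation plus \cref{lem:mcreplacement}, both of which are already available from the proof of \cref{cl:F}.
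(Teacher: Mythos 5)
Your proposal follows essentially the same route as the paper's proof: condition on $F_k$ and $\Tc$, reinterpret the remaining randomness as sampling the labels of the depth-$k$ vertices without replacement from $F_k$, lower-bound the collision probability by $2^{-k_1}\bigl(2^{-k_1}-\tfrac{1}{|F_k|}\bigr)\ge 2^{-2k_1-1}$ (the paper's use of $k_1\ge 20$ and $|F_k|\ge 2^{k_1+1}$ is exactly your "harmless correction" step), and then apply \cref{lem:mcreplacement} with deviation equal to half the mean before integrating over $F_k,\Tc$ and adding the failure probability of \cref{cl:consistent}.

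Two remarks on the points where you hesitated. First, you should define the indicator per \emph{tree} ($X_i=1$ iff tree $i$ contains some $v$ with $A(S_v)=A(S_{\sib(v)})=f$), as the paper does, rather than per sibling pair $Y_v$: a lower bound on $\sum_v Y_v$ only lower-bounds the number of trees after dividing by the number of depth-$k$ pairs per tree, costing an extra factor of order $k_1$; with per-tree indicators the sensitivity to changing one label is still $1/k_2$ and \cref{lem:mcreplacement} applies directly. Second, your "not quite right dimensionally" worry is well-founded but is a discrepancy in the paper itself: the argument (yours and the paper's) delivers at least $2^{-4k_1-3}k_2$ trees with an $f$-collision, whereas the claim as stated asks for $2^{-2k_1-1}k_2$; the downstream use in \cref{lem:Gstability} only needs the weaker $2^{-4k_1}$-type count, so your instinct to target a "$2^{-4k_1}$-ish fraction" is the right one. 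Finally, the union bound over $k$ is avoidable (and needed to avoid an extra $\log k_1$ in the stated constant): conditioned on $\F$ one can work with the minimal heavy level $k$, since $F_k$ determines whether $F_j$ is heavy for all $j\le k$.
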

\begin{proof}
Fix the set of consistent trees $\Tc$, and assume that the number of consistent trees is at least  $2^{-k_1-1}\cdot k_2$. We can assume that $k_2\ge 2^{2k_1+2}$ (otherwise, since $k_1\ge 20$ the bound is trivial), hence $|F_k|\ge 2^{k_1+1}$, for any $k$ (as $|F_k|$ is bounded below by the number of consistent trees).

Let us condition $\pi$ on the consistent trees $\Tc$ and $F_k$, which we will assume to be $f$-heavy. Again, we use the fact that conditioned on $F_k, \Tc$, the joint distribution of all $S_v$ ($v \in F_k$) is unchanged if we randomly resample each $S_v$-labeled vertex $v$ from $F_k$, without replacement. 
In particular we have that, for any $k$-depth vertex $v$:
\begin{align*} \P(A(S_{\sib(v)})=f|A(S_v)=f,F_k, \Tc)&\ge
2^{-k_1}-\frac{1}{|F_k|}\\
&\ge 
2^{-k_1}-2^{-k_1-1} & |F_k|\ge 2^{k_1+1}\\
&= 2^{-k_1-1}.\end{align*}

For $i \in [k_2]$, we now set $X_i$ to be the random variable defined by: $X_i=1$ if there exists $v$ in the $i$-th tree such that $A(S_{\sib(v)})=A(S_v)=f$, and $X_i = 0$ otherwise. For each consistent tree $i$, and for any depth-$k$ vertex $v$ of tree $i$, using the fact that $F_k$ is $f$-heavy, we have:
\begin{align*}
    \E[X_i|F_k,\Tc] 
    &\ge \P(A(S_v)=A(S_{\sib(v)})=f|F_k, \Tc)\\
    &=\P(A(S_v)=f\mid F_k,\Tc)\cdot \P(A(S_\sib(v)=f \mid A(S_{\sib(v)})=f,\ F_k, \Tc)\\
    &\ge 2^{-k_1}\P(A(S_\sib{v})=f|A(S_v)=f, F_k,\Tc)\\
    &\ge 2^{-k_1}\cdot 2^{-k_1-1}\\
    &\ge 2^{-2k_1-1}.
\end{align*}
So if $2^{-2k_1-1}\cdot k_2$ of the trees are in $\Tc$, i.e. are consistent, we have that 
\begin{align}\label{eq:nogtrees} \E\left[ \frac{1}{k_2}\sum_{i=1}^{k_2} X_i \mid~F_k, \Tc\right] \ge 2^{-2k_1-1}\cdot 2^{-2k_1-1}=2^{-4k_1-2}.\end{align}
We again exploit the fact that changing the label $S_v$ of a single vertex $v$ in a tree changes at most one random variable $X_i$, and use \cref{lem:mcreplacement} to obtain a high probability rate. In particular, for any set of consistent trees $\Tc$ that includes $2^{-2k_1-1}\cdot k_2$ of the trees, and for any heavy $F_k$:

\begin{equation}\label{eq:mcpermute}  \P\left(\frac{1}{k_2}\sum_{i=1}^{k_2}X_i\le 2^{-4k_1-3}\mid F_k,\Tc\right)\le e^{\frac{2^{-8k_1-6}}{9}k_2}.\end{equation}

Finally, we take expectation over heavy $F_k$. Note that $F_k$ determines if $F_j$ is heavy for all $j\le k$, meaning that we may take the expectation of \cref{eq:mcpermute} over only those $F_k$ for which the determined $F_j$ is not heavy for all $j < k$. And by \cref{cl:consistent}, $\Tc$ consists of $2^{-2k_1-1}\cdot k_2$ of the trees with probability at least $1-e^{-k_2\cdot 2^{-4k_1-2}}$. Hence

\[\P(E|\F) \ge 1-e^{-\frac{2^{-8k_1-6}}{9}k_2}-e^{-2^{-4k_1-2}k_2}\ge 1-2e^{-\frac{2^{-8k_1-6}}{9}k_2}.\]
\end{proof}

\paragraph{Concluding the proof of \cref{lem:Gstability}}
We are now ready to conclude the proof of \cref{lem:Gstability}. First note that if a vertex satisfies $A(S_v)=A(S_{\sib(v)})\ne \perp$ then we must have $v\in \V_t$. Indeed, since for both $S_v,S_{\sib(v)}\ne \perp$, they must at some point have been in $\V$ (because every time we initialize $S_v$ we also add $v$ to $\V$). And whenever we take $v$ out of $\V$ then we must also take $\sib(v)$, but we take them out only if $A(S_v)\ne A(S_{\sib(v)})$).

As such, for any fixed $f$, for any tree that contains a vertex $v$ such that $A(S_v)=A(S_{\sib(v)})=f$, with probability at least $1/k_1$ we have that $A(\s{v_t}{i})=f$ (as $\s{v_t}{i}$ is chosen randomly, at each time-step the tree is updated). Now utlizing \cref{cl:F,cl:heavy} we obtain that with probability at least 
\[1-2e^{-\frac{2^{-8k_1-6}}{9}k_2}-2\log k_1 e^{-\frac{2^{-4k_1-1}}{9}k_2}
\ge
1-4\log k_1 e^{-\frac{2^{-8k_1-6}}{9}k_2}
,\]
at least $2^{-2k_1-1}k_2$ of the trees contain a vertex $v$ such that $A(S_v)=A(S_{\sib(v)})=f$ for some $f \neq \perp$ (independent of the tree).

By the Chernoff bound, we obtain that for at least $\frac{2^{-4k_1-2}k_2}{k_1}$ of these trees $i$, we choose $\s{v_t}{i}$ satisfying $A(\s{v_t}{i})=f$, with probability at least $1 - e^{-\frac{2^{-8k_1-4}}{k^2_1}\cdot k_2}$. 

To conclude, for any fixed $t$, with probability at least

\[1-e^{-\frac{2^{-8k_1-4}}{k_1^2}k_2}-4\log k_1 e^{-\frac{2^{-8k_1-6}}{9}k_2} \ge 1-5\log k_1e^{\frac{2^{-8k_1-6}}{k_1^2}k_2},\]
for $\frac{2^{-4k_1-2}}{k_1}$ fraction of the trees $i$ we have $A(S_{\s{v_t}{i}})=f$ for some fixed $f$. The result now follows from a union bound over $t \leq T$.

\subsection{Proof of \cref{lem:histsparse}}\label{prf:histsparse}

\myparagraph{Privacy} For privacy, the proof is verbatim  the proof that \svt is private provided in \cite{dr14} (but instead of publishing the answer to a linear query everytime a threshold is passed, we output a frequent hypothesis). First, we consider the following variant of the procedure \aboveT introduced in  \cite{dr14}:
 \begin{theorem}[\cite{dr14}, Thm 3.26]
There exists a $(\epsilon,0)$-DP procedure, $\text{\aboveT}_{\theta,c,\epsilon}$ (depicted in \cref{alg:above}, that receives an adaptive sequence of queries $Q_1,\ldots,Q_T$ that are $1/k$ sensitive and outputs a list $\{a_t\}_{t=1}^{T}$ such that if:
\begin{equation} k \ge \constsparse:= \frac{8c(\ln T+\ln 2c/\beta)}{\alpha\epsilon},\end{equation} then for any sequence $Q_1,\ldots, Q_T$ such that $|\{t: Q_t(D)\ge \theta-\alpha\}\le c$, with probability $1-\beta$: 
\begin{itemize}
    \item For all $a_i=\top$:
    $Q_i(D) \ge \theta -\alpha .$
    \item For all $a_i=\perp$:
    $Q_i(D) \le \theta +\alpha .$
\end{itemize}
\end{theorem}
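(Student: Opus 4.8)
The procedure $\aboveT_{\theta,c,\epsilon}$ is an instance of the \emph{sparse vector technique}: it maintains a noisy threshold $\theta+\lap(\sigma)$ with $\sigma=2c/(k\epsilon)$ calibrated to the sensitivity $1/k$, and for each incoming query $Q_t$ it compares $Q_t(D)$ perturbed by a fresh draw $\lap(2\sigma)$ against the current threshold, reporting $\perp$ while it stays below threshold and $\top$ (incrementing a counter and re-drawing the threshold) when it does not, aborting once the counter reaches $c$. The plan is to establish $(\epsilon,0)$-privacy by a coupling of the Laplace noises across adjacent databases, and the utility guarantee by a union bound on the magnitudes of those noises.

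For \textbf{privacy}, I would fix adjacent databases $D,D'$ and an output string $a\in\{\top,\perp\}^T$ containing at most $c$ occurrences of $\top$, and integrate over the realizations of the query noises $\nu_1,\dots,\nu_T$ and of the (at most $c+1$) noisy thresholds. The key step is a change of variables that sends the event ``$\aboveT(D)$ outputs $a$'' to ``$\aboveT(D')$ outputs $a$'': raise \emph{every} noisy threshold by $1/k$, and for each of the $\le c$ rounds $i$ with $a_i=\top$ additionally replace $\nu_i$ by $\nu_i+2/k$. Since $|Q_i(D)-Q_i(D')|\le1/k$, the threshold shift alone keeps every $\perp$-round below threshold on $D'$ \emph{without altering its $\nu_i$} --- this is the crucial point, that the $\perp$-rounds are free --- while the extra $2/k$ shift on the $\top$-rounds keeps each of them above the raised threshold. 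Splitting the budget as $\epsilon/2$ for the thresholds and $\epsilon/2$ for the queries, each threshold shift changes a $\lap(\sigma)$ density by at most $e^{(1/k)/\sigma}=e^{\epsilon/(2c)}$ and each shifted $\nu_i$ changes a $\lap(2\sigma)$ density by at most $e^{(2/k)/(2\sigma)}=e^{\epsilon/(2c)}$; since at most $c$ thresholds and at most $c$ query noises are actually shifted before the abort, the total multiplicative factor is at most $e^{\epsilon}$, yielding $(\epsilon,0)$-DP. This is verbatim the Dwork--Roth argument for \aboveT, with ``report $\top$'' playing the role of ``release a query answer''.

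For \textbf{utility}, I would condition on the event $\mathcal E$ that $|\nu_i|\le\alpha/2$ for all $i\le T$ and $|\hat\theta_j-\theta|\le\alpha/2$ for all of the $\le c+1$ thresholds. By the Laplace tail bound $\Pr[|\lap(b)|\ge t]=e^{-t/b}$ with $b=2\sigma=4c/(k\epsilon)$, together with a union bound over the $\le T+c+1$ noise variables, $\Pr[\neg\mathcal E]\le (T+c+1)\,e^{-\alpha k\epsilon/(8c)}\le\beta$ exactly when $k\ge\constsparse$. On $\mathcal E$, an index with $a_i=\top$ forces $Q_i(D)\ge\hat\theta-\nu_i\ge(\theta-\alpha/2)-\alpha/2=\theta-\alpha$, and one with $a_i=\perp$ forces $Q_i(D)<\hat\theta-\nu_i\le(\theta+\alpha/2)+\alpha/2=\theta+\alpha$; moreover, since every $\top$-index then lies in $\{t:Q_t(D)\ge\theta-\alpha\}$, which by hypothesis has at most $c$ elements, the counter never reaches $c$ before round $T$, so the procedure processes all $T$ queries and the two displayed conclusions hold for every $i\le T$.

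The step I expect to be the \textbf{main obstacle} is the privacy argument, specifically the assertion that the $\perp$-rounds incur no privacy loss. This rests on the monotonicity observation that raising all thresholds simultaneously preserves every ``below threshold'' decision using only the sensitivity bound, and one has to be careful that the noises $\nu_i$ of the $\perp$-rounds are genuinely left untouched in the coupling, so that only the $O(c)$ threshold re-draws and $\top$-query noises contribute to the loss --- exactly the subtlety that invalidated several early published variants of the sparse vector technique.
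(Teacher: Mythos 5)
Your proposal is correct and reproduces essentially verbatim the standard Dwork--Roth argument for \aboveT/\svt, which is exactly what the paper relies on here (the theorem is invoked as a black box from \cite{dr14}, and the paper's own privacy discussion for \cref{alg:svtorhist} explicitly defers to that proof). Both the privacy coupling (free $\perp$-rounds via a uniform $1/k$ threshold shift, $e^{\epsilon/(2c)}$ per re-drawn threshold and per $\top$-round noise) and the utility union bound over Laplace tails match the intended argument and the stated bound on $k$.
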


\begin{algorithm}[H]
\begin{algorithmic}[h]
\STATE \textbf{Initialize:} parameters $\epsilon,\theta,c$.
\STATE Let $\sigma=2c/(k\epsilon)$
\STATE Let $\theta_0=\theta+\lap(\sigma)$.
\STATE Let $\counter=0$
\FOR{each list $L_t$}
\STATE Receive a $1/k$ sensitive query $Q_t(D)$
\STATE Let $\nu_i=\lap(2\sigma)$
\IF{$Q_t(D)+\nu_i \ge \theta $}
    \STATE output $\top$.
    \STATE Set $\counter=\counter+1$.
    \STATE Let $\theta_\counter=\theta+\lap(\sigma)$
\ELSE
    \STATE output $\perp$
\ENDIF
    \IF{$\counter\ge c$} \STATE ABORT \ENDIF
\ENDFOR
\end{algorithmic}
\caption{\aboveT}\label{alg:above}
\end{algorithm} 
 We observe that \cref{alg:svtorhist} is the adaptive composition of \aboveT, together with the \hist mechanism with parameters $(\epsilon'/(2c),\delta'/(2c))$. Moreover since each list changes by at most one element if we change a single point in the database, we have that the queries $Q_t(D)=1-\freq_{L_t}(h_{t-1})$ are $1/k$ sensitive. Hence by standard composition we obtain that the algorithm is $(\epsilon,\delta)$-DP.

\myparagraph{Utility} As for accuracy, first note that at each round $t$ we choose as a query
\[ Q_t(L_t) = 1-\freq_{L_t}(h_{t-1}).\] By our choice of parameters (and standard union bound), we have that with probability $(1-2\beta)$ the following happens at each round: Whenever the algorithm chooses $h_t=h_{t-1}$ we have that:
\[1-\freq_{L_t}(h_t)= 1-\freq_{L_t}(h_{t-1})=Q_t(D) \le \theta +\eta/32=1-\eta/16 \Rightarrow \freq_{L_t}(h_t)\ge \eta/16,\]
and at each round that the algorithm calls $\hist$ we have by the guarantee of $\hist$ that:
\[\freq_{L_t}(h_t)\ge \eta/4,\]
and moreover
\[1-\freq_{L_t}(h_{t-1})= Q_t(L_t) \ge \theta -\eta/32 =1-\eta/8\Rightarrow \freq_{L_t}(h_{t-1})\le \eta/8.\]

\subsection{Proof of \cref{lem:mcreplacement}}\label{prf:mcreplacement}
The main observation is that if we let $(i,j)$ be the permutation that switches between $i$ and $j$, a uniform randomly chosen permutation can be written as
\[ \pi=(N,a_N)\circ ((N-1),a_{N-1})\circ \ldots \circ (3,a_3)\circ (2,a_2),\]
where each $a_i$ is an independent random variable distributed uniformly on the set $\{1,\ldots, i\}$. An equivalent way to generate $n$ random variables $\bar Z = (Z_1, \ldots, Z_n)$ sampled without replacement from $\Z = \{ z^{(1)}, \ldots, z^{(N)} \}$ is as follows: first choose a permutation $\pi$ uniformly at random, then set $(i_1,\ldots, i_n)=(\pi(N),\ldots,\pi(N-n+1))$, and finally set $(Z_1, \ldots, Z_n) = (z^{(i_1)}, \ldots, z^{(i_n)})$. In particular, the random variable $\bar{Z} = (Z_1, \ldots, Z_n)$ is completely determined by the independent random variables $a_N,\ldots, a_{N-n+1}$. Let us write this mapping from $a_N, \ldots, a_{N-n+1}$ to $Z_1, \ldots, Z_n$ as $(Z_1, \ldots, Z_n) = G(a_N, \ldots, a_{N-n+1})$. Also note that changing a single variable $a_i$ changes at most the position of $3$ elements of $G(a_N, \ldots, a_{N-n+1})$. Hence, via the triangle inequality, we obtain that, for any tuples $\bar a=(a_N,\ldots, a_{N-n+1})$ and $\bar a'=(a'_N,\ldots, a'_{N-n+1})$ that are of Hamming distance at most $1$,
\[|F(G(\bar a))-F(G(\bar a'))|\le 3c.\]
Thus, considering $F \circ G$ as a function of $a_N,\ldots, a_{N-n+1}$, we obtain the desired result via the standard Mcdiarmid's inequality.

\paragraph{Ackgnoweledgements}
The authors would like to thank Uri Stemmer for helpful discussions. N.G is supported by a Fannie \& John Hertz Foundation Fellowship and an NSF
Graduate Fellowship; R.L is supported by an ISF grant no. ~ 2188/20 and by a grant from Tel Aviv University Center for AI and Data Science (TAD) in collaboration with Google, as part of the initiative of AI and DS for social good.
\bibliographystyle{abbrvnat}
\bibliography{bibliography}
\end{document}